\newcommand{\bw}{{\bf w}}
\newcommand{\bx}{{\bf x}}
\newcommand{\by}{{\by y}}
\newcommand{\be}{\begin{eqnarray}}
\newcommand{\ben}{\begin{eqnarray*}}
\newcommand{\en}{\end{eqnarray}}
\newcommand{\enn}{\end{eqnarray*}}
\newcommand{\half}{\frac{1}{2}}
\begin{document}

\title{ Online AUC Optimization Based on Second-order Surrogate Loss}

\author{\name Junru Luo\email luojunru@cczu.edu.cn \\
       \addr Aliyun School of Big Data\\
       Changzhou University\\
       Changzhou,Jiangsu 213164, China
       \AND
       \name Difei Cheng\email chengdifei@amss.ac.cn\\
       \addr School of Computer Science and Artificial Intelligence\\
       Aerospace Information Technology University\\
       Jinan, Shandong 250299, China
       \AND
       \name Bo Zhang\email b.zhang@amt.ac.cn\\
       \addr Academy of Mathematics and Systems Science\\
       Chinese Academy of Sciences\\
       Beijing 100190, China
       \AND
       \addr School of Mathematical Sciences\\
       University of Chinese Academy of Sciences\\
       Beijing 100049, China}

\editor{Noname}

\maketitle

\begin{abstract}
The Area Under the Curve (AUC) is an important performance metric for classification tasks, particularly in class-imbalanced scenarios. However, optimizing the AUC presents significant challenges due to the non-convex and discontinuous nature of pairwise 0/1 losses, which are difficult to optimize, as well as the substantial memory cost of instance-wise storage, which creates bottlenecks in large-scale applications. 
To overcome these challenges, we propose a novel second-order surrogate loss based on the pairwise hinge loss, and develop an efficient online algorithm. Unlike conventional approaches that approximate each individual pairwise 0/1 loss term with an instance-wise surrogate function, our approach introduces a new paradigm that directly substitutes the entire aggregated pairwise loss with a surrogate loss function constructed from the first- and second-order statistics of the training data.
Theoretically, while existing online AUC optimization algorithms typically achieve an $\mathcal{O}(\sqrt{T})$ regret bound, our method attains a tighter $\mathcal{O}(\ln T)$ bound. Furthermore, we extend the proposed framework to nonlinear settings through a kernel-based formulation. Extensive experiments on multiple benchmark datasets demonstrate the superior efficiency and effectiveness of the proposed second-order surrogate loss in optimizing online AUC performance.

\end{abstract}
\begin{keywords}
AUC maximization, online learning, surrogate loss function, robust learning, kernel method
\end{keywords}

\section{Introduction}
While accuracy remains a fundamental evaluation metric for classification tasks, its utility diminishes significantly in class-imbalanced scenarios where naive majority classifiers can achieve misleadingly high scores \citep{He2009,Johnson2019}. This limitation has motivated the development of alternative performance measures better suited to imbalanced classification, including precision-recall curves, F-scores, and most notably, the Area Under the ROC Curve (AUC) \citep{Sokolova2009,Juba2019,Luo2024,Christen2023,Agarwal2013}. Rooted in the Wilcoxon-Mann-Whitney statistic, the AUC metric quantifies the probability that a randomly selected positive instance is ranked higher than a randomly selected negative one \citep{Hanle1982,Cortes2004}. Recent years have witnessed growing interest in AUC optimization methods for handling class imbalance, accompanied by significant theoretical and algorithmic advances \citep{Lei2021,Yang2022,Zhu2023,Xie2024,Bao2025,Zhang2025}.

The minimization of AUC risk presents two fundamental challenges \citep{Calauzenes2013}. First, the original formulation involves minimizing a sum of pairwise 0/1 losses, which is hard to optimize due to the non-convex and discontinuous nature of the indicator function. Standard approaches address this issue through convex surrogate losses such as the pairwise hinge or square loss \citep{Bartlett2006,Gao2015}. Second, the memory requirement grows linearly with the number of instances, leading to a space complexity of $\mathcal{O}(N P)$, which imposes substantial memory overhead in large-scale applications.

These computational challenges have spurred significant methodological innovations in recent years. Numerous batch learning methodologies address the AUC optimization problem by selectively discarding pairwise instances through various sampling or weighting strategies~\citep{Brefeld2005,Herschtal2004,Joachims2005}. In parallel, another line of research focuses on univariate loss minimization to circumvent the inherent complexity of pairwise learning. \citet{Kot2011} established that the regret of AUC-based scoring functions can be bounded via the regret of balanced variants of standard non-pairwise losses, such as the exponential and logistic losses. This theoretical foundation was subsequently extended by \citet{Agarwal2013} to a broader class of strongly proper loss functions, further generalizing the regret bounds. \citet{Lyu2018} introduced a novel surrogate loss, termed UBAUC, derived from a reformulation of the AUC risk that replaces pairwise comparisons with prediction rankings. This approach enables the estimation of an optimal scoring function through the minimization of a corresponding univariate loss, thereby enhancing computational efficiency while maintaining theoretical guarantees.

For large-scale applications, online AUC maximization has emerged as a particularly promising paradigm. Early buffer-based approaches, exemplified by the pioneering Online AUC Maximization (OAM) algorithm proposed by \citet{Zhao2011}, maintained separate buffers for positive and negative instances and approximated the AUC loss through comparisons with stored examples. Although effective, these methods incurred substantial memory overhead proportional to the buffer size. A critical breakthrough came with moment-based methods, particularly leveraging the pairwise square loss's unique property of admitting exact decomposition into statistical moments. This property was first exploited in the One-Pass AUC (OPAUC) algorithm, which enabled gradient computation using only mean and covariance statistics, reducing memory requirements to $O(p^2)$ while preserving convergence guarantees, where $p$ denotes the feature dimension \citep{Gao2016}.

The strategic use of the pairwise square loss has further advanced the development of efficient AUC maximization algorithms for high-dimensional imbalanced classification. A pivotal contribution in this direction was made by \citet{Ying2016a}, who reformulated the square-based AUC loss minimization as a stochastic saddle point problem (SPP) through an innovative primal-dual approach. This formulation, embodied in the Stochastic Online AUC Maximization (SOLAM) algorithm, fundamentally transformed the computational landscape by eliminating the need to store historical instances or their second-order covariance matrices. The key theoretical breakthrough of SOLAM lies in its achievement of optimal $\mathcal{O}(p)$ space and per-iteration time complexity while maintaining convergence guarantees, making it particularly suitable for high-dimensional streaming data scenarios. These advances have established crucial theoretical and computational foundations for extending AUC maximization to deep learning paradigms \citep{Zhang2023,Yuan2022}.

Despite these advances, the pairwise square loss—though enabling efficient moment-based optimization in methods such as OPAUC, SOLAM, and deep AUC learning—exhibits inherent limitations for classification tasks due to its regression-oriented nature\citep{hastie2009,Xu2024b}. The quadratic penalty imposes non-zero gradients on correctly classified examples and demonstrates sensitivity to outliers. In contrast, the hinge loss offers superior classification properties but traditionally requires instance-wise storage of $\mathcal{O}(Np)$, rendering it impractical for large-scale applications. This dichotomy raises a key question: Can we develop a statistically sufficient reformulation that preserves the hinge loss's classification advantages while avoiding instance storage?

This paper addresses the aforementioned issues, and the main contributions of this work are as follows:

\begin{enumerate}
    \item We propose a novel second-order surrogate loss, $\psi_M$, based on the pairwise hinge loss and constructed from the first- and second-order statistics of the training data. Our approach introduces a new paradigm that directly substitutes the entire aggregated pairwise loss with a surrogate loss function, in contrast to traditional methods that approximate each pairwise 0/1 loss term using an instance-wise surrogate function.

    \item We introduce OAUC-M, an online AUC maximization algorithm based on $\psi_M$, which achieves $\mathcal{O}(p^2)$ space complexity by using only statistical moments. This method achieves an $\mathcal{O}(\ln T)$ regret bound—the first for hinge-type AUC optimization—improving upon previous $\mathcal{O}(\sqrt{T})$ results.

    \item We extend our framework to nonlinear classification by developing OKAUC-M, an online kernelized AUC maximization algorithm based on $\psi_M$. This method effectively handles non-separable data in real-world tasks by leveraging reproducing kernel Hilbert spaces, while still maintaining sublinear regret bounds.
\end{enumerate}

The remainder of this paper is organized as follows. Section \ref{sec2} introduces the online AUC optimization problem. Section \ref{sec3} derives the second-order surrogate loss function $\psi_M$. Section \ref{sec4} presents the proposed online AUC maximization algorithm based on $\psi_M$. Section \ref{sec6} introduces a kernelized extension of the proposed method to address nonlinear classification tasks. Section \ref{sec7} evaluates the performance of various online AUC optimization algorithms experimentally. Finally, Section \ref{sec8} concludes the paper.

\section{Online AUC Optimization Problems}\label{sec2}

In imbalanced binary classification, the AUC is widely adopted as a performance metric for evaluating scoring functions $f \in \mathcal{F}$. It quantifies the probability that a randomly chosen positive instance $\bx$ receives a higher score than a randomly chosen negative instance $\bx'$:
\ben
AUC(f) = Pr\{ f(\bx) > f(\bx')| y = +1, y' = -1\}.
\enn
The Wilcoxon-Mann-Whitney statistic offers a nonparametric estimator of the AUC\citep{Hanle1982}. Given an independent dataset $\mathcal{D} = \{(\bx_t, y_t)\}_{t=1}^N$, the data can be partitioned into positive and negative subsets:
$$S^+=\{\bx_i, \text{if } y_i = +1\}, \quad  S^-=\{\bx_i, \text{if } y_i = -1\}.$$
where $N^+ = |S^+|$ and $N^- = |S^-|$ denote the number of positive and negative instances, respectively. The empirical AUC is then computed as:
\begin{align*}
AUC(f) = \frac{1} {N^+ N^-}\sum_{\bx_i \in S^+} \sum_{\bx_j \in S^-} \mathbb I[f(\bx_i) - f(\bx_j) > 0],
\end{align*}
where $\mathbb{I}(\cdot)$ is the indicator function.

The AUC optimization problem seeks a scoring function $f \in \mathcal{F}$ that minimizes the AUC loss, defined as $1 - \text{AUC}(f)$. Direct minimization of this loss is computationally intractable due to its NP-hard combinatorial nature. A common approach is to replace the nonconvex 0–1 loss $\mathbb I[f(\bx_i) - f(\bx_j) > 0]$ with a convex surrogate function $\gamma : \mathbb{R} \to \mathbb{R}^+$, such as the hinge or squared loss, leading to the surrogate objective:

\ben
\min_{f \in \mathcal F} L_\gamma(f): = \frac{1} {N^+ N^-}\sum_{\bx_i \in S^+} \sum_{\bx_j \in S^-} \gamma(f(\bx_i)-f(\bx_j)).
\enn
 
Batch AUC optimization suffers from inherent scalability issues due to its $\mathcal{O}(N^+ N^-)$ computational complexity. This quadratic dependence becomes prohibitive for large datasets, as each gradient computation requires processing all pairs of instances from different classes. Furthermore, storing the entire dataset for repeated pairwise comparisons imposes significant memory overhead, especially in resource-constrained settings.

Online AUC optimization (OAO) mitigates these limitations through sequential processing of data instances. In this framework, instances arrive sequentially in a stream, and the objective is decomposed into a sequence of loss terms. At each round $t$, a new instance $\bx_t$ arrives, and the current model $f_t$ predicts a score $f_t(\bx_t)$. Upon receiving the true label $y_t$, the model is updated incrementally based on the incurred loss. A distinctive aspect of OAO is that the loss at time $t$ depends on both the current instance $(\bx_t, y_t)$ and historical instances $\{(\bx_i, y_i) : i = 1, 2, \dots, t-1\}$.

Following the standard online learning framework, the regret of an OAO algorithm is defined as the difference between the cumulative loss incurred by the algorithm and that of the optimal fixed classifier chosen in hindsight~\citep{Hazan2015}. For a sequence of classifiers ${f_1, f_2, \dots, f_T}$, the AUC regret is given by:
\ben
Regret_T^{AUC} = \sum_{t=1}^\top L_t(f_t) - \min_{f \in \mathcal F}\sum_{t=1}^{T} L_t(f).
\enn
The primary objective is to design algorithms that guarantee sublinear regret, i.e.,
\ben
\lim_{T \to \infty} \frac{\text{Regret}_T^{AUC}}{T} = 0,
\enn
which ensures asymptotic convergence to the optimal performance.

\textbf{Related Works}. Online AUC optimization has emerged as a critical research direction to address the scalability limitations of batch methods for imbalanced data. Early foundational work introduced buffer-based sampling techniques to handle the pairwise nature of the AUC loss in streaming environments \citep{Zhao2011}. This was followed by efficient one-pass algorithms that maintain only first- and second-order statistics to avoid storing data \citep{Gao2016}. A significant advancement was achieved by reformulating AUC optimization as a convex-concave saddle point problem, leading to stochastic online methods with linear time and space complexity \citep{Ying2016a}. To improve convergence, subsequent work explored adaptive gradient methods that leverage historical gradient information \citep{Ding2015}, as well as adaptive moment estimation for more stable optimization \citep{liu2019}. Further theoretical advancements include the development of stochastic proximal algorithms that achieve strong convergence rates without restrictive boundedness assumptions \citep{Lei2021}. The challenge of handling nonlinear data has motivated several kernel-based approaches. These include methods that employ budgeted buffers for support vectors \citep{Hu2018} and scalable approximations using Fourier features or Nystr\"{o}m methods \citep{Ding2017}. Non-parametric approaches have also been proposed to address limitations of surrogate losses in online AUC maximization \citep{Szorenyi2016}. More recent studies have expanded the scope to high-dimensional sparse data, where efficient algorithms with reduced per-iteration cost have been developed \citep{Zhou2020}. Distributed learning scenarios have been addressed through both centralized and decentralized online AUC maximization algorithms \citep{Liu2023}. Most recently, the problem has been extended to lifelong learning settings, where novel strategies involving model decoupling and alignment have been introduced to handle sequentially arriving imbalanced tasks \citep{Zhu2023}.

\textbf{Notations for OAO}. Let $(\bx_t, y_t) \in \mathbb{R}^p \times \{-1, +1\}$ denote the $t$-th instance in the data stream, where $\bx_t$ is a $p$-dimensional feature vector and $y_t$ is the corresponding class label. For each incoming instance $(\bx_t, y_t)$, we define the opposite-class instance set $S_t = \{ (\bx_i, y_i) \mid y_i = -y_t,\ 1 \leq i < t \}$ as the collection of historical instances with opposing labels. Let $N_t = |S_t|$ denote the cardinality of $S_t$. The statistics $\bar{\bx}_t = \frac{1}{N_t} \sum_{\bx_i \in S_t} \bx_i$ and $\Sigma_t = \frac{1}{N_t} \sum_{\bx_i \in S_t} (\bx_i - \bar{\bx}_t)(\bx_i - \bar{\bx}_t)^\top$ represent the mean vector and covariance matrix of $S_t$, respectively. We assume a linear prediction function $f(\bx; \bw) = \bw^\top \bx$, where $\bw \in \mathbb{R}^p$ is the weight vector to be learned.

\section{Second-Order Surrogate Loss Function}\label{sec3}

Current online AUC optimization approaches predominantly employ two classes of surrogate loss functions: least squares and hinge losses, which exhibit fundamentally different computational characteristics. The square loss function facilitates a moment-based approach to AUC optimization, as its quadratic form admits an exact decomposition into statistical moments that fully characterize the loss landscape. This property enables square-loss optimization methods such as OPAUC to operate as purely moment-based algorithms, wherein the entire optimization process relies solely on moment statistics without requiring instance storage. In contrast, the piecewise-linear structure of the hinge loss prevents exact decomposition into statistical moments, necessitating the explicit retention of historical instances for pairwise margin computations. Hinge loss-based methods, exemplified by OAM algorithms, must maintain buffers of historical instances to compute pairwise margins. This memory dependence introduces prohibitive scalability constraints in large-scale imbalanced learning scenarios. This limitation underscores a critical trade-off: the square loss enables efficient moment-based optimization due to its regression-friendly properties, whereas the hinge loss offers superior margin maximization for classification but traditionally requires instance-wise storage.

To reconcile this fundamental trade-off, we develop a novel second-order surrogate loss framework grounded in robust optimization theory. Our key theoretical insight establishes that although the piecewise linearity of the hinge loss precludes exact moment decomposition, its worst-case aggregate behavior under moment constraints admits a closed-form upper bound expressible via first- and second-order statistics.

\subsection{Second-Order Surrogate Loss for Square-Based AUC Optimization}

OPAUC employs the pairwise square loss as its surrogate function. The gradient of the $t$-th loss term can be computed efficiently using statistical moments of the historical instance subset $S_t$\citep{Gao2016}. This property eliminates the need for storing individual instances, making the method particularly suitable for large-scale learning due to its constant memory requirement independent of dataset size. Furthermore, the $t$-th pairwise square-based AUC loss can be expressed in terms of the statistical moments of $S_t$ as follows:
\begin{equation}\label{eq:Square-AUCLoss}
\begin{split}
L_t^{\text{psl}}(\bw) &= \frac{1}{N_t} \sum_{\bx_i \in S_t} \left(1 - y_t \bw^\top (\bx_t - \bx_i)\right)^2 \\
&= \frac{1}{N_t} \sum_{\bx_i \in S_t} \left(1 - y_t \bw^\top (\bx_t - \bar{\bx}_t + \bar{\bx}_t - \bx_i)\right)^2 \\
&= \left(1 - y_t \bw^\top (\bx_t - \bar{\bx}_t)\right)^2 + \bw^\top \Sigma_t \bw.
\end{split}
\end{equation}
The decomposition in Eq.~\eqref{eq:Square-AUCLoss} demonstrates that the square-based AUC loss is completely determined by the first- and second-order statistics of the data distribution.

This key observation motivates our development of a novel optimization framework that transforms AUC maximization from reliance on pairwise comparisons to statistical moment-based computation. 
Unlike conventional approaches that replace each individual pairwise 0-1 loss term: 
\begin{equation}
\mathbb{I}(y_t(\bw^\top \bx_t > \bw^\top \bx_i)),
\end{equation} with a convex surrogate, such as $(y_t \bw^\top (\bx_t - \bx_i))^2$, our framework directly substitutes the entire aggregated pairwise loss:
$$\frac{1}{N_t}\sum_{\bx_i \in S_t} \mathbb{I}(y_t(\bw^\top \bx_t > \bw^\top \bx_i)),$$
with a novel surrogate loss $\psi_S(y_t \bw^\top (\bx_t - \bar{\bx}_t), \bw^\top \Sigma_t \bw)$. This surrogate loss function takes the form:
\begin{equation}
\psi_S(\mu, \sigma^2) = (1 - \mu)^2 + \sigma^2.
\end{equation}
where $\mu$ and $\sigma^2$ correspond to the mean and variance of the pairwise comparison outcomes. Since the loss depends explicitly on the second moment $\sigma^2 = \bw^\top \Sigma_t \bw$, we term it a second-order surrogate loss. The resulting formulation, $\psi_S$, operates exclusively on the mean vector and covariance matrix of the data, thereby entirely eliminating the need for instance-level storage.

This reformulation highlights a fundamental distinction between computational paradigms: whereas the conventional pairwise square loss requires explicit access to each historical instance $\bx_i$ for exact AUC loss calculation, our second-order surrogate loss formulation shows that only the statistical moments (mean $\bar{\bx}$ and covariance $\Sigma$) are necessary for equivalent computation. Remarkably, despite using compressed statistical representations instead of raw data, the second-order surrogate loss maintains mathematical equivalence with the original pairwise square loss—preserving all theoretical guarantees while achieving superior computational efficiency.

\subsection{Second-Order Surrogate Loss for Hinge-Based AUC Optimization}

Replacing the pairwise 0-1 loss with the pairwise hinge loss yields an instance-wise risk formulation where each incoming example $(\mathbf{x}_t, y_t)$ incurs a loss defined through margin comparisons against all historical opposing instances $\bx_i \in S_t$:
\begin{align}
L_t^{\text{hl}}(\mathbf{w}) = \frac{1}{N_t}\sum_{\mathbf{x}_i \in S_t} \max\left(0, 1 - y_t\mathbf{w}^\top(\mathbf{x}_t - \mathbf{x}_i)\right).
\end{align}
However, the piecewise-linear structure of the hinge function prevents its decomposition into summary statistics, necessitating explicit storage of the entire set $S_t$ of historical opposing instances. This requirement leads to $\mathcal{O}(N_t p)$ memory complexity, which fundamentally undermines the scalability objectives of online learning—particularly in high-dimensional feature spaces (large $p$) or long data streams (growing $N_t$). Despite its theoretical advantages for classification, the direct use of the hinge loss thus incurs prohibitive memory costs in large-scale applications.

Inspired by the efficiency of moment-based AUC optimization, we propose a novel approach to characterize the hinge-based AUC loss through its statistical moments. Consider an alternative historical opposing instances dataset $S_t'$ that preserves the cardinality $N_t$, mean vector $\bar{\mathbf{x}}_t$, and covariance matrix $\Sigma_t$ of the original opposing set $S_t$. Within this family of moment-consistent datasets, we formulate the worst-case hinge loss as the solution to the following constrained optimization problem:

\begin{equation}\label{eq:Robust-AUCLoss}
\begin{split}
\max_{S_t'} \quad & \frac{1}{|S_t^{'}|}\sum_{\mathbf{x}_i^{'} \in S_t^{'}} \max\left(0, 1 - y_t\mathbf{w}^\top(\mathbf{x}_t - \mathbf{x}_i^{'})\right) \\
\text{s.t.} \quad & |S_t^{'}| = N_t \\
& \frac{1}{|S_t'|}\sum_{\mathbf{x}_i^{'} \in S_t^{'}} \mathbf{x}_i^{'} = \bar{\mathbf{x}}_t \\
& \frac{1}{|S_t'|}\sum_{\mathbf{x}_i^{'} \in S_t^{'}} (\mathbf{x}_i^{'} - \bar{\mathbf{x}}_t)(\mathbf{x}_i^{'} - \bar{\mathbf{x}}_t)^\top = \Sigma_t.
\end{split}
\end{equation}

This formulation seeks the maximum hinge loss over all hypothetical datasets sharing the same first two moments as $S_t$, effectively constructing a finite-sample analogue of distributionally robust optimization under moment constraints \citep{Delage2010}. By focusing on the worst-case scenario within this feasible set, we obtain a robust upper bound that depends solely on $(\bar{\mathbf{x}}_t, \Sigma_t)$ rather than individual instances, enabling memory-efficient optimization while preserving the geometric properties of the hinge loss.

\begin{theorem}[Moment-Constrained Upper Bound for Hinge-based AUC Loss] 
\label{The-H-B-AUC}
\quad\\Let $\mathcal{D} = \{(\mathbf{x}_i, y_i)\}_{i=1}^n \subset \mathbb{R}^p \times \{-1, +1\}$ be a set of training instances with labels opposite to a given example $(\mathbf{x}, y)$, and let $\mathbf{w} \in \mathbb{R}^p \setminus {\mathbf{0}}$ be an arbitrary weight vector. Denote the empirical mean and covariance matrix of the feature vectors $\{\mathbf{x}_i\}_{i=1}^n$ by
$$ \bar \bx = \frac{1}{n} \sum_{i=1}^n \bx_i, \Sigma = \frac{1}{n} \sum_{i=1}^n (\bx_i - \bar\bx)(\bx_i - \bar\bx)^\top.$$
Then, the hinge-based AUC loss admits the following upper bound:
\begin{align}\label{eq:Hinge-AUCLoss}
\begin{split}
\frac{1}{n}\sum_{i=1}^n \mathbb{I}[y\mathbf{w}^\top(\mathbf{x}-\mathbf{x}_i) < 0] &\leq \frac{1}{n}\sum_{i=1}^n \max(0,1-y\mathbf{w}^\top(\mathbf{x} - \mathbf{x}_i)) \\ 
&\leq \frac{1}{2}\left(1-y\mathbf{w}^\top(\mathbf{x}-\bar{\mathbf{x}}) + \sqrt{(1-y\mathbf{w}^\top(\mathbf{x}-\bar{\mathbf{x}}))^2 + \mathbf{w}^\top\Sigma\mathbf{w}} \right).
\end{split}
\end{align}
\end{theorem}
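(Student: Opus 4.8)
The plan is to dispatch the leftmost inequality by a pointwise bound and the main inequality by reducing to a one-dimensional moment problem that an elementary envelope inequality closes in one line. For the leftmost inequality, note that for every $z\in\R$ one has $\mathbb{I}[z<0]\le\max(0,1-z)$, since the right-hand side is $\ge 1$ whenever $z<0$ and $\ge 0$ always; taking $z=y\bw^\top(\bx-\bx_i)$ and averaging over $i$ gives it.

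For the main inequality I would first reproduce the centering used in \eqref{eq:Square-AUCLoss}: set $a:=1-y\bw^\top(\bx-\bar\bx)$ and $u_i:=y\bw^\top(\bx_i-\bar\bx)$, so that $1-y\bw^\top(\bx-\bx_i)=a+u_i$. Since $\bar\bx=\frac1n\sum_i\bx_i$, we have $\frac1n\sum_i u_i=0$, and since $y^2=1$, we have $\frac1n\sum_i u_i^2=\bw^\top\Sigma\bw=:s^2$. Hence it suffices to prove that $\frac1n\sum_i\max(0,a+u_i)\le\frac12\bigl(a+\sqrt{a^2+s^2}\bigr)$ for arbitrary reals $u_1,\dots,u_n$ with empirical mean $0$ and empirical second moment $s^2$.

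The crux is the one-parameter family of majorants $\max(0,t)=\frac12(t+|t|)\le\frac12\bigl(t+\frac{t^2}{2\lambda}+\frac{\lambda}{2}\bigr)$, valid for every $\lambda>0$ because $|t|-\frac{t^2}{2\lambda}-\frac{\lambda}{2}=-\frac{1}{2\lambda}(|t|-\lambda)^2\le 0$. Applying it with $t=a+u_i$, averaging over $i$, and using $\frac1n\sum_i(a+u_i)^2=a^2+s^2$ yields $\frac1n\sum_i\max(0,a+u_i)\le\frac12\bigl(a+\frac{a^2+s^2}{2\lambda}+\frac{\lambda}{2}\bigr)$ for all $\lambda>0$. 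Taking the infimum over $\lambda>0$ and using $\inf_{\lambda>0}\bigl(\frac{c}{2\lambda}+\frac{\lambda}{2}\bigr)=\sqrt c$ (attained at $\lambda=\sqrt c$ if $c>0$, approached as $\lambda\to0^+$ if $c=0$) with $c=a^2+s^2$ gives exactly the claimed bound and also covers the degenerate case $a=s=0$.

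The argument is short, so there is no serious obstacle; the only genuine design choice is selecting the right envelope — a linear-plus-scaled-quadratic majorant of the hinge whose optimal constant, after invoking the two moment identities, collapses to the stated closed form. An alternative route through the constrained program \eqref{eq:Robust-AUCLoss} would require first showing the worst-case configuration is supported on two points and then optimizing over it; this reproduces the same bound with more bookkeeping and, as a byproduct, shows the inequality is tight. I would present the envelope proof and mention tightness only as a remark.
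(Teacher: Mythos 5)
Your proof is correct, and it takes a genuinely different route from the paper's. The paper proves the second inequality by solving the moment-constrained worst-case program \eqref{eq:Robust-AUCLoss} explicitly: Lemma \ref{lemma-Bound-H} is established in the appendix by stratifying the feasible set according to how many coordinates fall below $1$, working through the KKT conditions to show the extremal configuration is supported on two values, obtaining $g(k)=\frac 1n\bigl(k(1-\mu)+\sqrt{k(n-k)}\,\sigma\bigr)$, and then maximizing over $k$. You instead majorize the hinge by the one-parameter family $\max(0,t)\le\frac12\bigl(t+\frac{t^2}{2\lambda}+\frac{\lambda}{2}\bigr)$, average using only the two moment identities, and take the infimum over $\lambda$; this is the standard dual/envelope derivation of the Scarf-type bound and closes the argument in a few lines. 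Your version is shorter, avoids the case analysis entirely, and even covers the degenerate case $\sigma=0$ that Lemma \ref{lemma-Bound-H} excludes by assuming $\sigma>0$. What the paper's longer argument buys is exactness: by constructing the maximizing configuration it shows the bound is \emph{attained} within the moment-consistent family, which underwrites the paper's interpretation of $\psi_M$ as the worst-case hinge loss in the distributionally robust sense — your envelope argument alone establishes only the upper bound, as you correctly note. Since the theorem as stated asserts only the inequality, your proof fully suffices for it; if you want the robustness interpretation you would still need to exhibit the two-point extremal configuration (which falls out of your remark about the constrained program). The first inequality is handled the same trivial pointwise way in both treatments.
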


The proof relies on the following key lemma, which establishes an upper bound for the average hinge loss under moment constraints.

\begin{lemma}[Constrained Hinge Loss Bound] 
\label{lemma-Bound-H}
Let $n$ be a positive integer and let $\mu \in \mathbb{R}$, $\sigma > 0$ be given real numbers. Define the set
\begin{equation}
\mathcal{C} = \left\{\mathbf{c}=(c_1,c_2,\cdots,c_n) \in \mathbb{R}^n \mid \frac{1}{n}\sum_{i=1}^n c_i = \mu,\ \frac{1}{n}\sum_{i=1}^n (c_i - \mu)^2 = \sigma^2 \right\}.
\end{equation}
For any $\mathbf{c} \in \mathcal{C}$, define the average hinge loss as
$$\ell(\mathbf{c}) = \frac{1}{n}\sum_{i=1}^n \max(0,1-c_i).$$
Then, for all $\mathbf{c} \in \mathcal{C}$, we have
\begin{equation}
\ell(\mathbf{c}) \leq \psi_M(\mu,\sigma^2) = \frac{1}{2}\left[(1-\mu) + \sqrt{(1-\mu)^2 + \sigma^2}\right].
\end{equation}
Moreover, this bound can be equivalently expressed as
\begin{equation}
\psi_M(\mu,\sigma^2) = \Phi_M(v)(1-\mu) + \phi_M(v)\sigma,
\end{equation}
where $v = \frac{1-\mu}{\sigma}$ and 
\begin{align}
\Phi_M(v) = \frac{1}{2}\left(1 + \frac{v}{\sqrt{1+v^2}}\right), \
\phi_M(v) = \frac{1}{2}\sqrt{\frac{1}{1+v^2}}.
\end{align}
\end{lemma}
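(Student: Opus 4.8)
The plan is to recognize $\psi_M(\mu,\sigma^2)$ as a finite-sample, Scarf-type moment bound on the average positive part of a real sequence, and to obtain it by majorizing each hinge term with a one-parameter quadratic and then optimizing that parameter. First I would change variables to $t_i := 1 - c_i$, so that membership in $\mathcal{C}$ becomes $\frac{1}{n}\sum_{i=1}^n t_i = 1-\mu$ and $\frac{1}{n}\sum_{i=1}^n\bigl(t_i-(1-\mu)\bigr)^2 = \sigma^2$, hence $\frac{1}{n}\sum_{i=1}^n t_i^2 = (1-\mu)^2+\sigma^2$, while $\ell(\mathbf{c}) = \frac{1}{n}\sum_{i=1}^n \max(0,t_i)$. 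Thus the problem is to bound an average of $\max(0,\cdot)$ knowing only the first two empirical moments of the arguments.

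The crux is the elementary pointwise inequality: for every $t\in\mathbb{R}$ and every $\lambda>0$,
$$\max(0,t) \;\le\; \frac{(t+\lambda)^2}{4\lambda},$$
which is trivial on $\{t\le 0\}$ and on $\{t>0\}$ rearranges to $(t-\lambda)^2\ge 0$. Applying this termwise, averaging over $i$, and inserting the two moment identities for $\{t_i\}$ gives, for every $\lambda>0$,
$$\ell(\mathbf{c}) \;\le\; \frac{(1-\mu)^2+\sigma^2}{4\lambda} + \frac{1-\mu}{2} + \frac{\lambda}{4}.$$
Minimizing the right-hand side over $\lambda>0$ — the optimum is $\lambda^\star=\sqrt{(1-\mu)^2+\sigma^2}$, which is admissible precisely because $\sigma>0$ forces $\lambda^\star>0$ — yields exactly $\frac{1}{2}\bigl[(1-\mu)+\sqrt{(1-\mu)^2+\sigma^2}\bigr]=\psi_M(\mu,\sigma^2)$, and since the chain of inequalities used nothing about $\mathbf{c}$ beyond its two moments, the bound holds uniformly over $\mathcal{C}$.

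It remains to verify the alternative representation, which is pure algebra: writing $v=(1-\mu)/\sigma$ one has $\sqrt{1+v^2}=\sqrt{(1-\mu)^2+\sigma^2}/\sigma$, so that $\Phi_M(v)(1-\mu)+\phi_M(v)\sigma = \frac{1}{2}(1-\mu) + \frac{1}{2}\cdot\frac{(1-\mu)^2+\sigma^2}{\sqrt{(1-\mu)^2+\sigma^2}}$, which collapses to $\psi_M(\mu,\sigma^2)$. I expect the only genuine obstacle to be spotting the right majorizing family $\{(t+\lambda)^2/(4\lambda):\lambda>0\}$ and observing that its optimal parameter has a clean closed form; the moment bookkeeping in the substitution $t_i = 1-c_i$ and the final identity are routine. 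One could additionally check that the bound is attained by a suitable extremal sequence for $n$ large enough, mirroring the classical two/three-point moment-problem construction, but this is not needed for the statement of the lemma.
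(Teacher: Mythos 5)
Your proposal is correct, and it proves the bound by a genuinely different and considerably shorter route than the paper. The paper attacks the constrained maximization head-on: it partitions $\mathcal{C}$ into subsets $\mathcal{C}_k$ according to how many coordinates satisfy $c_i\le 1$, solves each restricted maximization via KKT conditions (a multi-case Lagrangian analysis) to get $g(k)=\frac{1}{n}\bigl(k(1-\mu)+\sqrt{k(n-k)}\,\sigma\bigr)$, and then maximizes $g$ over a continuous relaxation of $k$, recovering $\psi_M$ at $k^\star=\frac{n}{2}\bigl(1+\frac{v}{\sqrt{1+v^2}}\bigr)$. You instead use the classical quadratic-majorant (Scarf/Cantelli) device: the pointwise inequality $\max(0,t)\le\frac{(t+\lambda)^2}{4\lambda}$ for $\lambda>0$, averaged and combined with the two moment identities for $t_i=1-c_i$, followed by minimization over $\lambda$, whose optimizer $\lambda^\star=\sqrt{(1-\mu)^2+\sigma^2}$ is strictly positive because $\sigma>0$; all steps check out, as does your algebraic verification of the $\Phi_M,\phi_M$ representation. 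What your argument buys is brevity, no case analysis, and immediate generalizability to other moment-constrained settings; what it does not deliver is the near-tightness of the bound, which the paper's explicit extremal construction provides (up to the integrality of $k$) and which underpins the paper's interpretation of $\psi_M$ as the value of the worst-case problem in its robust formulation. You correctly flag this distinction yourself; since the lemma as stated asserts only an upper bound, your proof fully suffices for it.
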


\begin{proof}[Proof of Theorem 1]
For each instance $\mathbf{x}_i$, define
$$ c_i = y\bw^\top(\bx - \bx_i) .$$
The mean and variance of the $c_i$ values are given by
$$ \mu_c = \frac{1}{n} \sum_{i=1}^n c_i = \frac{1}{n}  \sum_{i=1}^n  y\bw^\top(\bx - \bx_i) =  y\bw^\top(\bx - \bar \bx),$$
and
$$ \sigma_c^2 = \frac{1}{n}  \sum_{i=1}^n (c_i - \mu_c)^2 = \bw^\top \Sigma \bw.$$
Applying lemma \ref{lemma-Bound-H} with  $\mu = \mu_c$, and $\sigma^2 = \sigma_c^2$, we obtain
\begin{equation}
\frac{1}{n}\sum_{i=1}^n \max(0,1-c_i) \le  \frac{1}{2}\left[(1-\mu_c) + \sqrt{(1-\mu_c)^2 + \sigma_c^2}\right],
\end{equation}
which is exactly the desired inequality.
\end{proof}

Building upon the moment-based upper bound established in Theorem 1, we propose a Second-Order Surrogate Loss function defined as:
\ben
\psi_M(y \mathbf{w}^\top( \bx-\bar \bx), \mathbf{w}^\top\Sigma \mathbf{w}) =  \half \left(1-y \mathbf{w}^\top(\bx-\bar \bx) + \sqrt{(1-y \mathbf{w}^\top(\bx -\bar \bx))^2 + \mathbf{w}^\top\Sigma \mathbf{w}} \right).
\enn
$\psi_M$ represents a fundamental advancement beyond traditional pairwise loss formulations by enabling moment-based optimization while preserving the margin-maximization properties essential for classification tasks.

\subsection{Properties of the Second-Order Surrogate Loss}

We now analyze key properties of the proposed second-order surrogate loss functions. These properties are essential for establishing convergence guarantees and understanding the behavior of the resulting optimization algorithms.

The function $\psi_M(\mu, \sigma^2)$ can be viewed as a smooth approximation to the hinge loss, with its smoothness controlled by the variance parameter $\sigma^2$ \citep{Luo2021}.

\begin{lemma}[Approximation Properties of $\psi_M$]
\label{lemm-app-M}
The function $\psi_M(\mu, \sigma^2)$ satisfies:
$$0 \le \psi_{M}(\mu,\sigma^2) - \max\left(0,1-\mu\right) \le \frac{\sigma}{2}.$$
Furthermore, $\psi_M(\mu, \sigma^2)$ converges uniformly to the hinge loss in $\mu$ as $\sigma$ tends to 0.
\end{lemma}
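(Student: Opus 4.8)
The plan is to reduce the entire statement to an elementary one–variable inequality. Writing $t := 1-\mu$, the surrogate takes the form $\psi_M(\mu,\sigma^2) = \tfrac12\bigl(t + \sqrt{t^2+\sigma^2}\bigr)$ and the hinge loss is $\max(0,1-\mu) = t^+ := \max(0,t)$, so the quantity to control is $\psi_M(\mu,\sigma^2) - t^+$. The key algebraic observation I would establish first is the identity
\[
\psi_M(\mu,\sigma^2) - \max(0,1-\mu) \;=\; \tfrac12\Bigl(\sqrt{t^2+\sigma^2} - |t|\Bigr),
\]
proved by a two–case split: for $t \ge 0$ we have $t^+ = t$ and the left side is $\tfrac12\bigl(\sqrt{t^2+\sigma^2} - t\bigr)$; for $t < 0$ we have $t^+ = 0$ and $t + \sqrt{t^2+\sigma^2} = \sqrt{t^2+\sigma^2} - (-t)$. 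In both cases the expression collapses to $\tfrac12(\sqrt{t^2+\sigma^2} - |t|)$.

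Next I would bound the scalar function $g(t) := \sqrt{t^2+\sigma^2} - |t|$ uniformly in $t \in \mathbb{R}$. The lower bound $g(t) \ge 0$ is immediate since $\sqrt{t^2+\sigma^2} \ge \sqrt{t^2} = |t|$, giving the left inequality $\psi_M(\mu,\sigma^2) \ge \max(0,1-\mu)$. For the upper bound I would rationalize, $g(t) = \sigma^2 / \bigl(\sqrt{t^2+\sigma^2} + |t|\bigr) \le \sigma^2/\sigma = \sigma$ (using $\sqrt{t^2+\sigma^2} \ge \sigma$), or equivalently note $\sqrt{t^2+\sigma^2} \le |t| + \sigma$ by squaring both nonnegative sides. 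Combining, $0 \le g(t) \le \sigma$ for all $t$, hence $0 \le \psi_M(\mu,\sigma^2) - \max(0,1-\mu) \le \sigma/2$, which is the first assertion of the lemma.

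For the uniform-convergence claim, I would simply remark that the bound just derived is independent of $\mu$: taking the supremum over $\mu \in \mathbb{R}$ gives $\sup_{\mu} \bigl|\psi_M(\mu,\sigma^2) - \max(0,1-\mu)\bigr| \le \sigma/2 \to 0$ as $\sigma \to 0^+$. Since $\mu \mapsto \max(0,1-\mu)$ does not depend on $\sigma$, this is precisely uniform convergence of $\psi_M(\cdot,\sigma^2)$ to the hinge loss. I do not expect a genuine obstacle in this proof; the only points needing slight care are the case distinction behind the displayed identity and the elementary estimate $\sqrt{t^2+\sigma^2} \le |t| + \sigma$, both of which are routine.
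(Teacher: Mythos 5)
Your proof is correct and complete. The paper actually states Lemma~\ref{lemm-app-M} without providing any proof, so there is nothing to compare against; your argument --- substituting $t = 1-\mu$, establishing the identity $\psi_M - \max(0,1-\mu) = \tfrac12\bigl(\sqrt{t^2+\sigma^2}-|t|\bigr)$ by the two-case split, and bounding via $|t| \le \sqrt{t^2+\sigma^2} \le |t|+\sigma$ --- is the natural elementary route, and the observation that the resulting bound is independent of $\mu$ correctly yields the uniform convergence claim.
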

This lemma establishes that $\psi_M$ provides a close approximation to the original hinge loss, with the approximation error bounded by $\sigma/2$. The uniform convergence property ensures that in the limit of zero variance, we recover the standard hinge loss.



For the hinge-based surrogate function, we establish the following key properties:

\begin{lemma}[Properties of $\Psi_M$]\label{lemm-pro-M}
Let $\bar{\bx}$ and $\Sigma$ be the mean and covariance matrix of a subset $S = \{\bx_i\}_{i=1}^n$, $\bx_i \in \mathbb{R}^p$, respectively. Given an example $(\bx, y)$, define $\Psi_M: \mathbb{R}^p \to \mathbb{R}$ as
\ben
\Psi_M(\bw) = \psi_M(y \bw^\top(\bx -\bar \bx), \bw^\top \Sigma \bw).
\enn 
Then $\Psi_M(\bw)$ satisfies:
\begin{enumerate}
\item $\Psi_M(\bw)$ is differentiable with gradient:
$$\nabla \Psi_{M}(\bw) = - \Phi_M(v) y(\bx-\bar \bx)  + \phi_M(v) \frac{\Sigma \bw}{\sqrt{\bw^\top\Sigma \bw}},$$
where $v = (1 - y\bw^\top(\bx - \bar{\bx}))/\sqrt{\bw^\top\Sigma\bw}$. 
\item $\Psi_M(\bw)$ is a convex function.
\item If $\|\bx'\| \leq 1$ for all $\bx' \in \mathbb R^p$, then $\|\nabla \Psi_M(\bw)\| \leq 3$.
\end{enumerate}
\end{lemma}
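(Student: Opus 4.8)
\textbf{Proof proposal for Lemma \ref{lemm-pro-M}.} The plan is to treat the three claims in order, reducing each to an elementary computation by exploiting the structure of $\psi_M$ as a composition of smooth functions. Throughout, write $u = u(\bw) = 1 - y\bw^\top(\bx - \bar\bx)$ and $s = s(\bw) = \sqrt{\bw^\top\Sigma\bw}$, so that $v = u/s$ and $\psi_M = \tfrac12\big(u + \sqrt{u^2 + s^2}\big) = \tfrac12\big(u + s\sqrt{1+v^2}\big)$. Note $u$ is affine in $\bw$ with $\nabla u = -y(\bx - \bar\bx)$, and $s^2$ is a positive semidefinite quadratic form with $\nabla(s^2) = 2\Sigma\bw$, hence $\nabla s = \Sigma\bw/s$ wherever $s > 0$ (the case $\Sigma\bw = \mathbf 0$, i.e. $s=0$, should be flagged as a degenerate boundary case; there $\psi_M = \max(0, u)$ may fail differentiability, so I would either assume $\Sigma \succ 0$ or restrict to $\bw$ with $\Sigma\bw \neq \mathbf 0$, matching the hypothesis $\bw \neq \mathbf 0$ only after assuming $\Sigma$ nonsingular).

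For claim (1), I would differentiate $\psi_M = \tfrac12(u + \sqrt{u^2+s^2})$ directly using the chain rule: $\nabla\Psi_M = \tfrac12\nabla u + \tfrac12 \cdot \frac{u\nabla u + s\nabla s}{\sqrt{u^2+s^2}} = \tfrac12\nabla u\big(1 + \tfrac{u}{\sqrt{u^2+s^2}}\big) + \tfrac12\frac{s}{\sqrt{u^2+s^2}}\nabla s$. Substituting $\nabla u = -y(\bx-\bar\bx)$, $\nabla s = \Sigma\bw/s$, and recognizing $\tfrac12(1 + u/\sqrt{u^2+s^2}) = \tfrac12(1 + v/\sqrt{1+v^2}) = \Phi_M(v)$ and $\tfrac12 \cdot s/\sqrt{u^2+s^2} = \tfrac12/\sqrt{1+v^2} = \phi_M(v)$ yields the stated formula. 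This is routine once the reparametrization is in place.

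For claim (2), convexity, the cleanest route is to observe that $\psi_M(\mu,\sigma^2) = \tfrac12\big((1-\mu) + \|(1-\mu,\ \sigma)\|_2\big)$, i.e. $\psi_M$ as a function of the pair $(1-\mu, \sigma)$ is $\tfrac12$ times (first coordinate $+$ Euclidean norm), which is convex and nondecreasing in the first argument. Then $\Psi_M(\bw)$ is the composition of this outer function with the map $\bw \mapsto (1 - y\bw^\top(\bx-\bar\bx),\ \sqrt{\bw^\top\Sigma\bw})$; the first component is affine and the second is a seminorm (hence convex). Since the outer function is convex and componentwise nondecreasing in the convex second component while being affine in the first, the composition is convex — I would cite the standard composition rule (e.g. Boyd–Vandenberghe), being careful to phrase it correctly: $g(\bw) = h(a(\bw), q(\bw))$ with $h$ convex, nondecreasing in its second slot, $a$ affine, $q$ convex, gives $g$ convex. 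Alternatively, one could note $\Psi_M = \tfrac12\big(u + \sqrt{u^2 + \bw^\top\Sigma\bw}\big)$ and show $\sqrt{u^2 + \bw^\top\Sigma\bw}$ is a norm of an affine image of $\bw$ (namely the Euclidean norm of $(u,\ \Sigma^{1/2}\bw)$), hence convex, and $u$ is affine.

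For claim (3), the gradient bound, I would bound the two terms of $\nabla\Psi_M$ separately using $0 \le \Phi_M(v) \le 1$ and $0 \le \phi_M(v) \le \tfrac12$ for all $v \in \R$ (immediate from $|v|/\sqrt{1+v^2} < 1$ and $1/\sqrt{1+v^2} \le 1$). The first term has norm $\Phi_M(v)\|y(\bx-\bar\bx)\| \le \|\bx - \bar\bx\| \le \|\bx\| + \|\bar\bx\| \le 2$ under the assumption that all feature vectors have norm at most $1$ (which forces $\|\bar\bx\| \le 1$). The second term has norm $\phi_M(v)\,\|\Sigma\bw\|/\sqrt{\bw^\top\Sigma\bw}$; since $\Sigma = \tfrac1n\sum(\bx_i-\bar\bx)(\bx_i-\bar\bx)^\top \preceq \tfrac1n\sum \|\bx_i - \bar\bx\|^2 I \preceq 4I$ would be too lossy — instead write $\|\Sigma\bw\|/\sqrt{\bw^\top\Sigma\bw} = \|\Sigma^{1/2}(\Sigma^{1/2}\bw)\|/\|\Sigma^{1/2}\bw\| \le \|\Sigma^{1/2}\| = \sqrt{\lambda_{\max}(\Sigma)}$, and $\lambda_{\max}(\Sigma) \le \operatorname{tr}(\Sigma) = \tfrac1n\sum\|\bx_i-\bar\bx\|^2 \le 4$, hence this term is at most $\tfrac12 \cdot 2 = 1$. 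Adding, $\|\nabla\Psi_M(\bw)\| \le 2 + 1 = 3$. The main obstacle I anticipate is not any single inequality but handling the degeneracy at $s = 0$ cleanly — the gradient formula and differentiability genuinely break there — so the statement should implicitly or explicitly exclude that locus (e.g. by assuming $\Sigma$ is positive definite, or by noting the algorithm only evaluates $\Psi_M$ where $\bw^\top\Sigma\bw > 0$); a secondary subtlety is getting the constant $2$ for $\|\bx - \bar\bx\|$ rather than a naive $1$, and making sure the bound $\lambda_{\max}(\Sigma) \le 4$ is tight enough to land exactly at $3$.
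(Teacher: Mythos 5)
Your proposal is correct, and parts (1) and (3) follow essentially the same path as the paper: the gradient formula via the chain rule (the paper instead differentiates the product form $\Phi_M(v)(1-\bw^\top\hat\bx)+\phi_M(v)\sigma$ and uses the cancellation $v\Phi_M'(v)+\phi_M'(v)=0$, but this is a cosmetic difference), and the bound $\|\nabla\Psi_M\|\le \Phi_M(v)\,\|\bx-\bar\bx\|+\phi_M(v)\,\|\Sigma\bw\|/\sqrt{\bw^\top\Sigma\bw}\le 1\cdot 2+\tfrac12\cdot 2=3$ via $\lambda_{\max}(\Sigma)\le\operatorname{tr}(\Sigma)\le 4$, exactly as in the paper. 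Where you genuinely diverge is part (2): the paper proves convexity by computing the Hessian explicitly, reducing positive semidefiniteness to that of $M=\Sigma-(\Sigma\bw)(\Sigma\bw)^\top/\sigma^2$, and verifying $\bw_0^\top M\bw_0\ge 0$ by diagonalizing $\Sigma$ and applying Cauchy--Schwarz. Your route — writing $\sqrt{u^2+\bw^\top\Sigma\bw}$ as the Euclidean norm of the affine map $\bw\mapsto(u(\bw),\Sigma^{1/2}\bw)$ and adding the affine term $u$ — is shorter, avoids second derivatives entirely, and establishes convexity even at points where the Hessian does not exist; the paper's computation buys the explicit Hessian, which could be reused for smoothness or second-order analysis but is not needed elsewhere. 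Your flagging of the degenerate locus $\bw^\top\Sigma\bw=0$, where the stated gradient formula is undefined and $\Psi_M$ reduces to the (nonsmooth) hinge $\max(0,u)$, is a legitimate point that the paper's proof silently ignores; the convexity and the subgradient bound survive there, but claim (1) as literally stated does not, so your suggestion to restrict to $\Sigma\bw\neq\mathbf{0}$ is the right fix.
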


The bounded gradient property established in part (3) is particularly important for online learning applications, as it ensures stability during the optimization process and facilitates the derivation of regret bounds.

These properties collectively demonstrate that the proposed second-order surrogate losses maintain the desirable characteristics of their pairwise counterparts while enabling efficient moment-based optimization. The convexity guarantees convergence to global optima, while the smoothness and bounded gradient properties ensure stable and efficient optimization in online learning scenarios.

\section{Online AUC Maximization Based on Second-Order Surrogate Loss}\label{sec4}

In this section, we address the problem of learning a linear classifier through online minimization of the second-order surrogate loss. For each arriving instance $(\mathbf{x}_t, y_t)$, the incurred AUC loss depends on a dynamically updated comparison set $S_t = \{\mathbf{x}_i : i \in I_t\}$ consisting of historical instances with opposing labels. By leveraging second-order statistics, we compress the entire set $S_t$ into its mean vector and covariance matrix:
\ben
\bar \bx_t = \frac{1}{|S_t|} \sum_{i \in I_t} \bx_i, \  \Sigma_t = \frac{1}{|S_t|} \sum_{i \in I_t} (\bx_i - \bar \bx_t)(\bx_i - \bar \bx_t)^\top, 
\enn
which allows the loss at time $t$ to be compactly expressed as $\Psi_t(\mathbf{w}) = \psi\left(y_t \mathbf{w}^\top(\mathbf{x}_t - \bar{\mathbf{x}}_t), \mathbf{w}^\top \Sigma_t \mathbf{w}\right)$. Here, $\psi$ denotes either the proposed second-order hinge-based surrogate $\psi_M$ or the square-based surrogate $\psi_S$.

To mitigate overfitting and enhance generalization, we incorporate Tikhonov regularization $R(\mathbf{w}) = \frac{1}{2} \lambda \|\mathbf{w}\|^2$, resulting in the composite objective function at each time step:
$$L_t(\bw) = \frac12 \lambda \|\bw\|^2 + \Psi_t(\mathbf{w}).$$
The performance of an online learning algorithm is evaluated through the cumulative AUC regret, defined as the difference between the total loss incurred by the algorithm and that achieved by the optimal fixed classifier selected in hindsight:
\ben
Regret_T^{AUC} = \sum_{t=1}^T L_t(\bw_t) -\min_{\bw \in \mathbb R^p} \sum_{t=1}^T L_t(\bw),
\enn
Our goal is to design an efficient online learning algorithm that attains sublinear regret, i.e., $\text{Regret}_T^{\text{AUC}} = o(T)$, thereby ensuring asymptotic convergence to the optimal performance.

\subsection{Online Gradient Descent Algorithm}

Various algorithms exist to achieve low regret in online learning. The Online Gradient Descent (OGD) algorithm, introduced by \citet{Zinkevich2003}, extends standard gradient descent to the online setting. At each iteration, OGD updates the model by moving in the direction of the gradient of the immediate loss, followed by a projection onto a feasible convex set. For general convex loss functions, OGD achieves an $\mathcal{O}(\sqrt{T})$ regret bound. When the loss function is $\lambda$-strongly convex, a step size schedule of $\eta_t = \frac{1}{\lambda t}$ yields a tighter $\mathcal{O}(\ln T)$ convergence rate \citep{Hazan2007}. Other efficient online methods include the Online Newton Method \citep{Hazan2007}, Follow the Regularized Leader (FTRL) \citep{Kalai2003}, and Online Mirror Descent \citep{Orabona2015}.

In this work, we employ the OGD framework to learn a sequence of classifiers $\{\mathbf{w}_1, \mathbf{w}_2, \cdots, \mathbf{w}_{T}\}$. Algorithm \ref{alg:oaucsosl} outlines the proposed online AUC optimization method using second-order surrogate loss (OAUC-SOSL). The update rule is given by:
\begin{equation}\label{opaucupdaterule}
\mathbf{w}_{t+1} = \mathbf{w}_t - \eta_t \nabla L_t(\mathbf{w}_t) = (1 - \eta_t \lambda) \mathbf{w}_t - \eta_t \nabla \psi \left( y_t \mathbf{w}_t^\top (\mathbf{x}_t - \bar{\mathbf{x}}_t); \mathbf{w}_t^\top \Sigma_t \mathbf{w}_t \right),
\end{equation}
where $\eta_t$ is a predefined step size. 

\begin{algorithm}[h!]
  \renewcommand{\algorithmicrequire}{\textbf{Input:}}
  \renewcommand{\algorithmicensure}{\textbf{Output:}}
  \caption{A Framework for Online AUC Optimization Based on Second-Order Surrogate Loss(OAUC-SOSL)}
  \label{alg:oaucsosl}
  \begin{algorithmic}[1]
    \REQUIRE The regularization parameter $\lambda \ge 0 $, the step size $\{\eta_t\}_{t=1}^T$.
    \STATE Initialize $\bw_1 = \mathbf{0}$,  $N_0^+=N_0^-=0, \bar \bx_0^+ = \bar \bx_0^-=\mathbf{0}_{p \times 1}, \Sigma_0^- = \Sigma_0^+ = \mathbf{0}_{p \times p}$.
    \FOR { $t= 1,2,...,T$}
    \STATE Receive a training example $(\bx_t,y_t)$;
    \IF {$y_t = 1$}
    \STATE $N_t^+ = N_{t-1}^+ + 1$ and $N_t^- = N_{t-1}^-$;
    \STATE Update $\bar \bx_t^+$ by \eqref{UpmuP} and $\bar \bx_t^- = \bar \bx_{t-1}^-$;
    \STATE Update $\Sigma_t^+$ by \eqref{UpsiP} and $\Sigma_t^- = \Sigma_{t-1}^- $;
    \STATE Assign $\bar \bx_t = \bar \bx_t^-$, and $\Sigma_t = \Sigma_t^-$;
    \ELSE
    \STATE $N_t^- = N_{t-1}^- + 1$ and $N_t^+ = N_{t-1}^+$;
    \STATE Update $\bar \bx_t^-$ by \eqref{UpmuN} and $\bar \bx_t^+ = \bar \bx_{t-1}^+$;
    \STATE Update $\Sigma_t^-$ by \eqref{UpsiN} and $\Sigma_t^+ = \Sigma_{t-1}^+ $;
    \STATE Assign $\bar \bx_t = \bar \bx_t^+$, and $\Sigma_t = \Sigma_t^+$;
    \ENDIF
    \STATE Update the classifier with online gradient descent method \eqref{opaucupdaterule}.
    \ENDFOR
  \end{algorithmic}
\end{algorithm}

We consider two variants of the algorithm.

OAUC-S: Uses the square-based second-order surrogate loss function $\psi = \psi_S$, with gradient:

\begin{equation}\label{opaucupdateruleSSQ}
\nabla \psi_S\left( y_t \mathbf{w}_t^\top (\mathbf{x}_t - \bar{\mathbf{x}}_t); \mathbf{w}_t^\top \Sigma_t \mathbf{w}_t \right) = (1-y_t\bw_t^\top(\bx_t-\bar \bx_t))(-y_t(\bx_t-\bar \bx_t)) + \Sigma \bw_t.
\end{equation}
OAUC-M: Uses the hinge-based square-based second-order surrogate loss function $\psi = \psi_M$, with gradient:
\begin{equation}\label{opaucupdateruleMM}
\nabla \psi_M\left( y_t \mathbf{w}_t^\top (\mathbf{x}_t - \bar{\mathbf{x}}_t); \mathbf{w}_t^\top \Sigma_t \mathbf{w}_t \right) = -\Phi_M(v_t) y_t (\mathbf{x}_t - \bar{\mathbf{x}}_t) + \frac{\phi_M(v_t)}{\sqrt{\mathbf{w}_t^\top \Sigma_t \mathbf{w}_t}} \Sigma_t \mathbf{w}_t,
\end{equation}
where $v_t = \left(1 - y_t \mathbf{w}_t^\top (\mathbf{x}_t - \bar{\mathbf{x}}_t)\right) / \sqrt{\mathbf{w}_t^\top \Sigma_t \mathbf{w}_t}$.

This framework efficiently leverages second-order statistics to enable scalable online AUC optimization with strong theoretical guarantees.

\subsection{Online Update of First and Second-Order Statistics}
The loss incurred at each time step $t$ depends on the statistical moments derived from previously observed data. To facilitate efficient computation, we maintain two distinct sets of statistics corresponding to positive and negative instances. Let $S_t^+ = \{\mathbf{x}_i : y_i = +1, i \leq t\}$ and $S_t^- = \{\mathbf{x}_i : y_i = -1, i \leq t\}$ represent the sets of positive and negative instances observed up to time $t$, with cardinalities $N_t^+ = |S_t^+|$ and $N_t^- = |S_t^-|$, respectively. Instead of storing these sets explicitly, we maintain and recursively update their first- and second-order statistical moments. The initial values for these moments are set to:
$$\bar \bx_0^- = \bar \bx_0^+ = {\mathbf{0}_{p \times 1}}, \Sigma_0^- = \Sigma_0^+ = \mathbf{0}_{p \times p}.$$
These statistical moments are updated incrementally upon the arrival of each new instance. For a new instance $(\mathbf{x}_t, y_t)$, the relevant statistics are updated according to its class label.

When $y_t = +1 $, the AUC loss computation relies on the statistics of negative instances. Accordingly, we set $S_t = S_t^-$, $\bar{\mathbf{x}}_t = \bar{\mathbf{x}}_t^-$, and $\Sigma_t = \Sigma_t^-$. The statistics for the positive class are updated using the following recursive relations:
\begin{align}\label{UpmuP}
\bar{\mathbf{x}}_t^+ = \bar{\mathbf{x}}_{t-1}^+ + \frac{1}{N_t^+}(\mathbf{x}_t - \bar{\mathbf{x}}_{t-1}^+), N_t^+ = N_{t-1}^+ + 1,
\end{align}
and 
\begin{align}\label{UpsiP}
\Sigma_t^+ = \Sigma_{t-1}^+ + \bar{\mathbf{x}}_{t-1}^+[\bar{\mathbf{x}}_{t-1}^+]^\top - \bar{\mathbf{x}}_{t}^+[\bar{\mathbf{x}}_{t}^+]^\top + (\mathbf{x}_t\mathbf{x}_t^\top - \Sigma_{t-1}^+ - \bar{\mathbf{x}}_{t-1}^+[\bar{\mathbf{x}}_{t-1}^+]^\top)/N_t^+.
\end{align}

When $y_t = -1$, the AUC loss computation depends on the statistics of positive instances. Hence, we set $S_t = S_t^+$, $\bar{\mathbf{x}}_t = \bar{\mathbf{x}}_t^+$, and $\Sigma_t = \Sigma_t^+$. The statistics for the negative class are updated as follows:

\begin{align}\label{UpmuN}
\bar{\mathbf{x}}_t^- = \bar{\mathbf{x}}_{t-1}^- + \frac{1}{N_t^-}(\mathbf{x}_t - \bar{\mathbf{x}}_{t-1}^-), N_t^- = N_{t-1}^- + 1
\end{align}
and
\begin{align}\label{UpsiN}
\Sigma_t^- = \Sigma_{t-1}^- + \bar{\mathbf{x}}_{t-1}^-[\bar{\mathbf{x}}_{t-1}^-]^\top - \bar{\mathbf{x}}_{t}^-[\bar{\mathbf{x}}_{t}^-]^\top + (\mathbf{x}_t\mathbf{x}_t^\top - \Sigma_{t-1}^- - \bar{\mathbf{x}}_{t-1}^-[\bar{\mathbf{x}}_{t-1}^-]^\top)/N_t^-.
\end{align}

This systematic updating procedure ensures that all requisite statistical moments are accurately maintained while requiring only $\mathcal{O}(p^2)$ memory, thereby rendering the approach particularly suitable for large-scale online learning applications.

\subsection{Regret Analysis of OAUC-SOSL Algorithms}\label{sec5}

In this section, we present the regret bounds for the proposed OAUC-SOSL algorithms. Our analysis builds upon established techniques in online optimization theory~\citep{Gao2016,Zinkevich2003,Hazan2007}. We first recall the regret bound for the square-based AUC loss variant OAUC-S (equivalent to OPAUC), originally established by \citet{Gao2016}.

\begin{theorem}[Regret Bound for OAUC-S]\label{RegretofOPAUC}
Suppose $\|\bx_t\| \le 1$ for all t, and define the optimal classifier $w^\star$and minimal loss $L^\star$ as
\be\label{wstar}
\bw^{\star} = \arg \min_\bw \sum_{t=1}^\top \frac{1}{2} \lambda \|\bw\|^2 + \psi_S (y_t \bw^\top(\bx_t-\bar \bx_t); \bw^\top \Sigma_t \bw),
\en
and 
\ben
L^{\star} =  \sum_{t=1}^\top \frac{1}{2} \lambda \|\bw^{\star}\| + \psi_S  (y_t{\bw^\star}^\top(\bx_t-\bar \bx_t); {\bw^\star}^\top \Sigma_t \bw^\star).
\enn
For any $\lambda > 0$, the regret of the OAUC-S algorithm  satisfies
\ben
\text{Regret}_{T}^{AUC} \le \frac{2(4+\lambda)}{\lambda} + 2\sqrt{\frac{4+\lambda}{\lambda} TL^{\star}},
\enn
when using the step size $\eta_t = \frac{1}{4+\lambda+\sqrt{(4+\lambda)^2 + (4+\lambda)\lambda TL^{\star}}}.$
\end{theorem}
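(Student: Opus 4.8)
\textbf{Proof proposal for Theorem~\ref{RegretofOPAUC}.}

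The plan is to treat OAUC-S as an instance of online gradient descent applied to the sequence of convex loss functions $L_t(\bw) = \tfrac12\lambda\|\bw\|^2 + \psi_S(y_t\bw^\top(\bx_t-\bar\bx_t);\bw^\top\Sigma_t\bw)$ and to invoke the standard regret bound for OGD on convex losses with bounded gradients, then optimize the constant step size. First I would establish that each $L_t$ is convex: the regularizer is convex, and $\psi_S(\mu,\sigma^2) = (1-\mu)^2+\sigma^2$ composed with the affine map $\bw\mapsto y_t\bw^\top(\bx_t-\bar\bx_t)$ and the convex quadratic $\bw\mapsto\bw^\top\Sigma_t\bw$ (here $\Sigma_t\succeq 0$) is convex, exactly as in Eq.~\eqref{eq:Square-AUCLoss}. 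Next I would bound the gradient norm along the trajectory: using $\|\bx_t\|\le 1$, one shows $\|\bar\bx_t\|\le 1$ and $\|\Sigma_t\|\le 1$, so that from Eq.~\eqref{opaucupdateruleSSQ}, $\|\nabla\psi_S(\cdot)\| \le 2|1-y_t\bw_t^\top(\bx_t-\bar\bx_t)| + \|\Sigma_t\bw_t\|$; the key is to relate the gradient magnitude back to the loss value $\psi_S$ itself, i.e. to bound $\|\nabla L_t(\bw_t)\|^2 \le c\,\big(L_t(\bw_t) + \text{(regularization term)}\big)$ for an explicit constant $c$ tied to $(4+\lambda)$ — this "self-bounding" property of the square loss is what converts the generic $\mathcal{O}(\sqrt{T})$ bound into the data-dependent $\sqrt{TL^\star}$ form.

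The core of the argument is the one-step inequality. Starting from the update $\bw_{t+1} = \bw_t - \eta\nabla L_t(\bw_t)$, expand $\|\bw_{t+1}-\bw^\star\|^2 = \|\bw_t-\bw^\star\|^2 - 2\eta\nabla L_t(\bw_t)^\top(\bw_t-\bw^\star) + \eta^2\|\nabla L_t(\bw_t)\|^2$, apply convexity $\nabla L_t(\bw_t)^\top(\bw_t-\bw^\star) \ge L_t(\bw_t) - L_t(\bw^\star)$, sum over $t=1,\dots,T$, telescope the $\|\bw_t-\bw^\star\|^2$ terms (using $\bw_1 = \mathbf{0}$ so the initial term is $\|\bw^\star\|^2$), and substitute the self-bounding gradient estimate. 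After rearranging, one obtains $\mathrm{Regret}_T^{AUC} \le \frac{\|\bw^\star\|^2}{2\eta} + \eta(4+\lambda)(L^\star + \mathrm{Regret}_T^{AUC} + L^\star\text{-type terms})$ up to the precise bookkeeping; solving this (sub)linear inequality in $\mathrm{Regret}_T^{AUC}$ and then choosing $\eta$ to balance the two dominant terms $\frac{1}{2\eta}$ and $\eta(4+\lambda)TL^\star$ yields the stated step size $\eta_t = \big(4+\lambda+\sqrt{(4+\lambda)^2+(4+\lambda)\lambda TL^\star}\big)^{-1}$ and the bound $\frac{2(4+\lambda)}{\lambda} + 2\sqrt{\frac{4+\lambda}{\lambda}TL^\star}$. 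I would also use $\lambda$-strong convexity of $L_t$ only insofar as it lets me replace $\|\bw^\star\|^2$ by $2L^\star/\lambda$ (since $\tfrac12\lambda\|\bw^\star\|^2 \le L^\star$), which is where the $\frac{1}{\lambda}$ factors enter.

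The main obstacle I anticipate is the self-bounding step: carefully showing that $\|\nabla L_t(\bw_t)\|^2$ is controlled by $(4+\lambda)$ times $L_t(\bw_t)$ (plus harmless terms), with the constant $4$ emerging from $\|\bx_t-\bar\bx_t\|\le 2$ squared interacting with the quadratic loss, and then propagating this through the telescoped sum without losing the tightness needed to get exactly the constants in the theorem. A secondary technical point is handling the nonnegativity and sign conditions when solving the resulting quadratic inequality for the regret, ensuring the bound is valid for all $T$ and all $\lambda>0$. Since this theorem is attributed to \citet{Gao2016}, I would largely cite their analysis and only verify that the present notation ($\psi_S$ as an aggregated surrogate rather than a per-pair sum) does not alter the argument — indeed Eq.~\eqref{eq:Square-AUCLoss} shows the two formulations are identical, so the transfer is immediate.
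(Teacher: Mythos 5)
Your proposal is correct and matches the intended route: the paper itself omits the proof of this theorem and defers entirely to \citet{Gao2016}, whose argument is precisely the one you sketch --- online gradient descent on the convex losses $L_t$, the self-bounding gradient property of the smooth nonnegative square surrogate (with the constant $4+\lambda$ coming from $\|\bx_t-\bar\bx_t\|\le 2$ and the regularizer), telescoping, solving the resulting linear inequality in the regret, and balancing the constant step size against $TL^{\star}$. The only outstanding work is the constant bookkeeping you already identify, which is carried out in the cited reference and is unaffected by the reformulation of the pairwise sum as $\psi_S$, since Eq.~\eqref{eq:Square-AUCLoss} shows the two objectives coincide.
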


The proof follows ~\cite{Gao2016} and is omitted here. While this $O(\sqrt{T})$ bound is achieved through the smoothness of $\psi_S$, the optimal step size depends on the unknown minimal loss $L^{\star}$, requiring cross-validation in practice.

We now present our main theoretical result for the hinge-based variant OAUC-M.

\begin{theorem}[Regret Bound for OAUC-M]\label{regretofOAUC2}
Suppose $\|\bx_t\| \le 1$ for all t, and define
\begin{align*}
 \bw^{\star} = \arg \min_{\bw \in \mathbb R^p} \sum_{i=1}^\top \frac{1}{2} \lambda \|\bw\|^2 + \psi_M(y_t \bw^\top(\bx_t - \bar \bx_t); \bw^\top \Sigma_t \bw)
\end{align*}
For any $\lambda > 0$, the regret of OAUC-M satisfies
\ben
\text{Regret}_{T}^{AUC} \le \frac{18 (1 + \ln T)}{\lambda}.
\enn
\end{theorem}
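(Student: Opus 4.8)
The plan is to invoke the standard regret bound for online gradient descent applied to $\lambda$-strongly convex losses, since each per-round objective $L_t(\bw) = \frac{1}{2}\lambda\|\bw\|^2 + \Psi_M(\bw)$ is $\lambda$-strongly convex (the quadratic Tikhonov term contributes strong convexity $\lambda$, and $\Psi_M$ is convex by part (2) of Lemma~\ref{lemm-pro-M}). First I would recall the classical result (e.g.\ \citet{Hazan2007}): if each $L_t$ is $\lambda$-strongly convex and $\|\nabla L_t(\bw_t)\| \le G$ for all $t$, then OGD with step size $\eta_t = \frac{1}{\lambda t}$ satisfies $\sum_{t=1}^T L_t(\bw_t) - \min_{\bw}\sum_{t=1}^T L_t(\bw) \le \frac{G^2}{2\lambda}(1 + \ln T)$.

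Next I would verify the gradient bound $G$. The gradient is $\nabla L_t(\bw_t) = \lambda\bw_t + \nabla\Psi_M(\bw_t)$. By part (3) of Lemma~\ref{lemm-pro-M}, $\|\nabla\Psi_M(\bw_t)\| \le 3$ (using $\|\bx_t\|\le 1$, hence $\|\bx_t - \bar\bx_t\|\le 2$ and the covariance term is bounded appropriately). The remaining task is to bound $\|\bw_t\|$ uniformly. This is the key step that requires a small argument: I would show by induction that the iterates stay in a bounded ball. From the update $\bw_{t+1} = (1-\eta_t\lambda)\bw_t - \eta_t\nabla\Psi_M(\bw_t)$ with $\eta_t = 1/(\lambda t)$, we have $1-\eta_t\lambda = 1 - 1/t$, so $\|\bw_{t+1}\| \le (1-1/t)\|\bw_t\| + \frac{3}{\lambda t}$; an easy induction (or the observation that $t\bw_{t+1}$ telescopes against $(t-1)\bw_t$) gives $\|\bw_t\| \le 3/\lambda$ for all $t$. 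Consequently $\|\nabla L_t(\bw_t)\| \le \lambda\cdot\frac{3}{\lambda} + 3 = 6$, so we may take $G = 6$.

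Plugging $G = 6$ into the strongly convex OGD bound yields
\[
\text{Regret}_T^{AUC} \le \frac{36}{2\lambda}(1 + \ln T) = \frac{18(1+\ln T)}{\lambda},
\]
which is exactly the claimed bound. I expect the main obstacle — really the only non-routine part — to be establishing the uniform bound $\|\bw_t\| \le 3/\lambda$ cleanly, since the generic strongly convex regret bound presupposes a bounded gradient and here boundedness must be derived from the dynamics of the update rather than assumed via projection onto a predetermined compact set; care is also needed at $t=1$ where $\eta_1 = 1/\lambda$ and $\bw_1 = \mathbf{0}$, which makes the base case immediate. A secondary point to check is that the strong convexity constant is exactly $\lambda$ and that the $\eta_t = 1/(\lambda t)$ schedule is the one the algorithm actually uses in this regret statement (as opposed to the tuned constant step size used for OAUC-S in Theorem~\ref{RegretofOPAUC}).
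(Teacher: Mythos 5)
Your proposal is correct and follows essentially the same route as the paper: both establish $\|\nabla\Psi_M(\bw_t)\|\le 3$ via Lemma~\ref{lemm-pro-M}, derive the uniform iterate bound $\|\bw_t\|\le 3/\lambda$ from the unprojected update with $\eta_t=1/(\lambda t)$, conclude $\|\nabla L_t(\bw_t)\|^2\le 36$, and then apply the standard strongly convex OGD telescoping argument to obtain $\frac{36}{2\lambda}(1+\ln T)$. The only cosmetic difference is that you bound the full gradient by the triangle inequality while the paper uses $\|a+b\|^2\le 2\|a\|^2+2\|b\|^2$; both yield the same constant.
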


Our analysis demonstrates that OAUC-M achieves convergence to the optimal batch classifier with an $O(\ln T)$ regret bound—the first such result in online AUC optimization literature. This accelerated convergence stems from the bounded gradient property of the second-order surrogate loss $\psi_M$, which ensures stable optimization dynamics.

\section{Online nonlinear AUC maximization}\label{sec6}

\subsection{online kernel AUC maximization method}

While the previous section focused on linear classifiers, many real-world applications require nonlinear decision boundaries. We extend our framework to kernel-based learning, building upon but significantly improving the Kernelized Online Imbalanced Learning (KOIL) approach proposed by \citet{Hu2018}. 

Let $\mathcal{H}$ be a Reproducing Kernel Hilbert Space (RKHS) with kernel function $k: \mathcal{X} \times \mathcal{X} \rightarrow \mathbb{R}$. We maintain two buffers $\mathcal{K}_t^+$ and $\mathcal{K}_t^-$ to store positive and negative support vectors, with corresponding index sets $I_t^+$ and $I_t^-$. The nonlinear classifier at time $t$ takes the form:
\be \label{kernelft}
f_t(\bx) = \sum_{i \in I_t^- \cup I_t^+}\alpha_{i,t}k(\bx_i,\bx),
\en
where $\alpha_{i,t}$ are the learned weights.

For each new instance $(\bx_t, y_t)$, we compute the AUC loss by comparing it with the appropriate buffer ($S_t = \mathcal{K}_t^-, I_t = I_t^-$ if $y_t = +1$, $S_t = \mathcal{K}_t^+, I_t = I_t^+$ if $y_t = -1$). Using our second-order surrogate loss, the regularized objective becomes:
\be
L_t(f_t) = \frac 12 \lambda  \|f_t\|_{\mathcal H}^2 +  \psi(y_t(f_t(\bx_t)-\mu_{t});\sigma_{t}^2),
\en
where $\mu_{t} = \frac{1}{|S_t|}\sum_{i \in I_t} f_t(\bx_i)$ and $\sigma_{t}^2 = \frac{1}{|S_t|}\sum_{i \in I_t} (f_t(\bx_i) - \mu_{t})^2$ represent the mean and variance of predictions on the comparison set.

Algorithm \ref{alg-okauc} outlines the OKAUC-SOSL framework.
\begin{algorithm}[h!]
  \renewcommand{\algorithmicrequire}{\textbf{Input:}}
  \renewcommand{\algorithmicensure}{\textbf{Output:}}
  \caption{A Framework for Online Kernel AUC Optimization Based on the Second-Order Surrogate Loss(OKAUC-SOSL)}
  \label{alg-okauc}
  \begin{algorithmic}[1]
    \REQUIRE The regularization parameter $\lambda > 0 $, the step size $\{\eta_t\}_{t=1}^T$.
    \ENSURE Update a nonlinear classifier $f_{t+1}$.
    \STATE Initialize positive and negative buffer $\mathcal K_1^- = \mathcal K_1^+ = \emptyset$, which have a  fixed $N^+$ and $N^-$ budget, respectively.  And the corresponding weight of the initial kernel-based classifier $\alpha_1 = (\alpha_{i,1})$, where $i \in I_1^- \cup I_1^+$ and $\alpha_{i,1}=0$. \\
    \FOR { $t= 1,2,...,T$}
    \STATE Receive a training example ${\mathbf z}_t = (\bx_t,y_t)$;
    \IF {$y_t = +1$}
    \STATE Let $I_t = I_t^-$;
    \STATE $\alpha_{t+1}^{'}$ = UpdateClassifier(${\mathbf z}_t,\mathcal K_t^-,\mathcal K_t^+,\lambda,\eta_t,\alpha_t, I_t$);
    \STATE [$\mathcal K_{t+1}^+, I_{t+1}^+, \alpha_{t+1} $] = UpdateBuffer(${\mathbf z}_t, \mathcal K_t^+, N^+, I_t^+, \alpha_{t+1}^{'}$);
    \STATE $\mathcal K_{t+1}^- = \mathcal K_{t}^-, I_{t+1}^- = I_{t}^- $;    
    \ELSE
    \STATE Let $I_t = I_t^+$;
    \STATE $\alpha_{t+1}^{'}$ = UpdateClassifier(${\mathbf z}_t,\mathcal K_t^-,\mathcal K_t^+,\lambda,\eta_t,\alpha_t, I_t$);
    \STATE [$\mathcal K_{t+1}^-, I_{t+1}^-, \alpha_{t+1} $] = UpdateBuffer(${\mathbf z}_t, \mathcal K_t^-, N^-, I_t^-, \alpha_{t+1}^{'}$);
    \STATE $\mathcal K_{t+1}^+ = \mathcal K_{t}^+, I_{t+1}^+ = I_{t}^+ $. 
    \ENDIF
    \ENDFOR
  \end{algorithmic}
\end{algorithm}

This framework operates through two core procedures at each iteration: the UpdateClassifier step (Algorithm \ref{alg-3}) and the UpdateBuffer step (Algorithm \ref{alg-4}). During the UpdateClassifier step, the weight vector $\alpha$ is updated using gradient information derived from the second-order surrogate loss. The UpdateBuffer step maintains the support vector buffers within fixed budgets $N^+$ and $N^-$ to ensure computational efficiency. This approach preserves the theoretical advantages of second-order surrogate loss functions while enabling effective nonlinear classification, overcoming the computational limitations of traditional kernel methods through efficient buffer management and moment-based optimization.

\subsection{Update Classifier}
We apply Online Gradient Descent(OGD) method to update the decision function at each trial~\citep{WangZ2010,Crammer2004,Dekel2008,Kivinen2004}. That is
\be\label{updaterule1}
f_{t+1} = f_{t} - \eta_t \nabla L_t(f_t)=(1-\lambda \eta_t)f_t - \eta_t \nabla \psi(y_t(f_t(\bx_t)-\mu_{t});\sigma_{t}^2).
\en
\noindent

\begin{enumerate}
\item Let the second-order surrogate loss be $\psi = \psi_{M}$, our algorithm is also named online kernel AUC maximization algorithm based on the second-order surrogate loss $\psi_{M}$(OKAUC-M). 
The t-th regularized loss is given as
\ben
L_t(f_t) = \half \lambda \|f_t\|_{\mathcal H}^2 + \frac 12 \left(1-y_t(f_t(\bx_t) -\mu_{t}) + \sqrt{(1-y_t(f_t(\bx_t) -\mu_{t}))^2 + \sigma_{t}^2} \right).
\enn
By using the rule of operation in RKHS, we have
\ben
f_t(\bx) = \left< f_t,k(.,\bx)\right> ,\  \nabla f_t(\bx) = k(.,\bx).
\enn
Then we can calculate the gradient of $\mu_{t}$ and $\sigma_{t}^2$. That is 
\begin{align*}
\nabla \mu_{t}  = \frac 1{N_t} \sum_{i \in I_t} k(.,\bx_i),
\qquad \nabla \sigma_{t}^2  = \frac 2{N_t}  \sum_{i \in I_t} (f_t(\bx_i)-\mu_{t})k(.,\bx_i).
\end{align*}
Denote by
\be\label{A}
A_t = (1-y_t(f_t(\bx_t)-\mu_{t}))^2 + \sigma_{t}^2, \ b_t = 1-y_t(f_t(\bx_t)-\mu_{t}), 
\en
and
\be\label{PsiMM}
\Psi_t = \frac12(b_t + \sqrt{A_t}).
\en
Then we can calculate $\nabla L_t(f_t)$ as
\begin{align}
\begin{split}
&\nabla L_t(f_t) \\
&= \lambda f_{t}+ \frac 12 \left( \left(1+ \frac{b_t}{\sqrt{A_t}}\right)\left(-y_t \left(k(.,\bx_t) - \frac 1{N_t} \sum_{i \in I_t} k(.,\bx_i)\right)\right) + \frac{1}{N_t\sqrt{A_t}} \sum_{i \in I_t} (f_t(\bx_i)-\mu_{t})k(.,\bx_i) \right) \\
&= \lambda f_{t}+ \frac 12 \left(-y_t \left(1+ \frac{b_t}{\sqrt{A_t}}\right)k(.,\bx_t) \right) + \frac{1}{2 N_t\sqrt{A_t}} \left(\sum_{i \in I_t} \left(y_t \left(\sqrt{A_t}+b_t\right) + f_t(\bx_i)-\mu_{t}\right)k(.,\bx_i)    \right)   \\
&= \lambda f_{t} - \frac{1}{\sqrt{A_t}} y_t \Psi_t k(.,\bx_t) +  \left(\sum_{i \in I_t} \frac{1}{N_t\sqrt{A_t}}\left( y_t \Psi_t  + \frac12(f_t(\bx_i)-\mu_{t})\right)k(.,\bx_i)  \right) 
\end{split}
\end{align}
We update the classifier with an OGD step. That is 
\ben
f_{t+1}^{'} &&= f_t -\eta_t \nabla L_t(f_{t}) \\
&&= (1-\lambda\eta_t) f_t + \frac{1}{\sqrt{A_t}} \eta_t y_t \Psi_t k(.,\bx_t) -  \left(\sum_{i \in I_t} \frac{1}{N_t\sqrt{A_t}} \eta_t \left( y_t \Psi_t  + \frac12 (f_t(\bx_i)-\mu_{t})\right)k(.,\bx_i)  \right) .
\enn
The correspondence weights of support vectors are given as
\be\label{updatealpha}
\alpha_{i,{t+1}}^{'} = 
\begin{cases}
\frac{1}{\sqrt{A_t}} \eta_t y_t \Psi_t  &\  \text{if} \  i = t+1 \\
(1-\lambda \eta_t)\alpha_{i,{t}} - \frac{1}{ N_t\sqrt{A_t}} \eta_t \left( y_t \Psi_t  + \frac12 (f_t(\bx_i)-\mu_{t})\right) &\  \text{if} \  i \in I_t \\
(1-\lambda \eta_t)\alpha_{i,{t}}   &\  \text{otherwise}.
\end{cases}
\en

\item Let the second-order surrogate loss be $\psi = \psi_{S}$, our algorithm is also named online kernel AUC maximization algorithm based on the second-order surrogate loss $\psi_{S}$(OKAUC-S). The $t$-th regularized loss is given as
\ben
L_t(f_t) = \frac12 \lambda \|f_t\|_{\mathcal H}^2 + \frac 12\left((1-y_t(f_t(\bx_t) -\mu_{t}))^2 + \sigma_{t}^2 \right).
\enn
According to the rule of operation in RKHS, we can calculate $\nabla L_t(f_t)$ as
\begin{align*}
\nabla L_t(f_t) &= \lambda f_{t} +  \left((1-y_t(f_t(\bx_t) -\mu_{t}))(-y_t(k(.,\bx_t)-\frac 1{N_t} \sum_{i \in I_t} k(.,\bx_i))) + \frac 1{N_t} \sum_{i \in I_t}(f_t(\bx_i)-\mu_{t})k(.,\bx_i) \right) \\
& = \lambda f_{t} - y_t b_t k(.,\bx_t) +\sum_{i \in I_t} \frac 1{N_t} \left(y_t b_t + f_t(\bx_i)-\mu_{t} \right) k(.,\bx_i).
\end{align*}
\end{enumerate}
Similarly, we update $f_t$ with a OGD rule, $f_{t+1}^{'} = f_t - \eta_t \nabla L_t(f_{t})$. The correspondence weights of the support vectors are
{\large \begin{align}\label{updatealphassq}
\alpha_{i,{t+1}}^{'} =
\begin{cases}
  \eta_t  y_t b_t  &\qquad \text{if} \  i = t+1 \\
  (1-\lambda \eta_t)\alpha_{i,{t}} - \frac {1}{N_t}\eta_t \left(y_t b_t + f_t(\bx_i)-\mu_{t} \right)  &\qquad \text{if} \  i \in I_t \\
  (1-\lambda \eta_t)\alpha_{i,{t}}   &\qquad \text{otherwise}.
\end{cases}
\end{align}
}
\begin{algorithm}
  \renewcommand{\algorithmicrequire}{\textbf{Input:}}
  \renewcommand{\algorithmicensure}{\textbf{Output:}}
  \caption{UpdateClassifier(${\mathbf z}_t,\mathcal K_t^-,\mathcal K_t^+,\lambda,\eta_t,\alpha_t, I_t$);}
  \label{alg-3}
  \begin{algorithmic}[1]
    \REQUIRE Given ${\mathbf z}_t,\mathcal K_t^-,\mathcal K_t^+,\lambda,\eta_t,\alpha_t$ and $I_t$;
    \ENSURE Update the corresponding weight of the support vectors;
    \STATE  Compute $f_t(\bx_t)$ and $f_t(\bx_i),i \in I_t$ by Eq \eqref{kernelft};
    \STATE  Compute $A_t, b_t$ by Eq \eqref{A}, computer $\Psi_t$ by Eq \eqref{PsiMM}; \\
    \STATE  Update the classifier with a rule \eqref{updaterule1}, and the corresponding weights is updated with one of the following rule: \\    
          update $\alpha_{i,t+1}^{'}$ by Eq \eqref{updatealpha} [OKAUC-M];\\
          update $\alpha_{i,t+1}^{'}$ by Eq \eqref{updatealphassq} [OKAUC-S];
    \RETURN $\alpha_{t+1}^{'}=(\alpha_{i,t+1}^{'}), i \in I_t^- \cup I_t^+$. 
  \end{algorithmic}
\end{algorithm}

\subsection{Update Buffer}

For computation efficiency, we fix the buffer size of $\mathcal K_t^+$ and $\mathcal K_t^-$ as in many online kernel-based algorithms~\citep{WangZ2010,Hu2018,Crammer2004,Kivinen2004}. 
The current instance can be inserted into the corresponding buffer if it is not filled yet. 
Or else a removal process should be employed first. 
All the budget kernel-based algorithms have to make two decisions: which instance should be removed, and the other is how to adjust the weights of the remaining instances~\citep{WangZ2010}. 
Many papers have proposed efficient methods for the removal process, such as the Forgetron~\citep{Dekel2008}, the Projectron~\citep{Orabona2009}, and the exact method~\citep{WangZ2010}. 

We consider the traditional stream oblivious policy, first in and first out(FIFO), in the removal step. 
Suppose the current example $(\bx_t,y_t)$ is positively labeled, we remove the instance, which appears in the buffer first, if the positive buffer is full.
Let $\bx_j, j = \arg\min I_t^+$ be the removed instance in $\mathcal K_t^+$. 
To avoid information loss, we should assign its weight to the remaining vectors. 
We choose one of them to be updated for simplicity, i.e. $r \in I_{t+1}^+ = I_t^+ \cup \{t\} \backslash j$. 
Then the update rule in $t$-th trial is
\ben
&& f_{t+1}^{'} = f_{t} - \eta_{t} \nabla L_{t}(f_{t}); \\
&& f_{t+1} = f_{t+1}^{'} - \alpha_{j,t+1}^{'} K(.,\bx_j) + \Delta \alpha_{r,t+1} K(.,\bx_r).
\enn
We choose $r, \Delta \alpha_{r,t}$ by solving the following optimization problem:
\begin{align*}
r^\star, \Delta \alpha_{r,t+1}^\star = \arg \min_{r, \Delta \alpha_{r,t+1} \in \mathbb R} \|f_{t+1} - f_{t+1}^{'} \|^2 = \arg\min_{r, \Delta \alpha_{r,t+1} \in R} \|\alpha_{j,t+1}^{'} k(.,\bx_j) - \Delta \alpha_{r,t+1} k(.,\bx_r) \|^2
\end{align*}
Then we have:
\begin{align}\label{eq:optimalR}
r^{\star} = \arg \min_{r}|k(\bx_r,\bx_j)| \qquad 
\Delta^{\star} \alpha_{r^{\star},t+1} = \frac{\alpha_{j,t+1}^{'} k(\bx_r^{\star},\bx_j)}{k(\bx_r^{\star},\bx_r^{\star})}
\end{align}
If the current example $(\bx_t,y_t)$ is negative labeled, there exists a similar result.
\begin{algorithm}
  \renewcommand{\algorithmicrequire}{\textbf{Input:}}
  \renewcommand{\algorithmicensure}{\textbf{Output:}}
  \caption{UpdateBuffer(${\mathbf z}_t, \mathcal K, B, I, \alpha_{t+1}^{'}$)}
  \label{alg-4}
  \begin{algorithmic}[1]
    \REQUIRE Given ${\mathbf z}_t, \mathcal K, B, I, \alpha_{t+1}^{'}$. \\ 
    \ENSURE Update the buffer.
    \IF {$|\mathcal K| < B$}
    \STATE  $\mathcal K = \mathcal K \cup \bx_t$, $I = I \cup t$, and $\alpha_{t+1} = \alpha_{t+1}^{'}$;
    \ELSE
    \STATE  $j = \arg \min I$;
    \STATE  Compute $r^{\star}$ and $\Delta^{\star} \alpha_{r^{\star},t+1}$ according to \eqref{eq:optimalR};
    \STATE  Update $\mathcal K = (\mathcal K \backslash x_i) \cup \bx_t $;
    \STATE  Update $ I = (I \backslash i) \cup t $;   
    \STATE  Update $\alpha_{t+1} = (\alpha_{i,t+1})$ as follow:
    \begin{align*}
    \alpha_{i,t+1} = 
    \begin{cases}
    \alpha_{r^{\star},t+1}^{'} + \Delta^{\star} \alpha_{r^{\star},t+1}, & \text{ if } i = r^{\star} \\ 
    0 & \text{ if } i = j \\
    \alpha_{i,t+1}^{'} & \text{ otherwise } 
    \end{cases}
    \end{align*}    
    \ENDIF
    \RETURN $\mathcal K$, $I$, and $\alpha_{t+1}$.
  \end{algorithmic}
\end{algorithm}

\subsection{Regret Analysis}

In this section, we will prove a regret bound of the online kernel AUC maximization algorithm based on the second-order surrogate loss $\psi_{M}$. 

\begin{lemma}\label{lemma-bound-f}
Suppose that for all $\bx \in \mathbb R^p$. 
Let $k$ be a symmertric positive semidefinite kernel and $k(\bx,\bx) \le 1$ for all $\bx$. 
Let the step size be $\eta_t = \frac{1}{\lambda t}$ and the initial classifier be $f_1 = 0$.
After running the online kernel AUC algorithm based on the second-order surrogate loss $\psi_{M}$ with an OGD update rule(Algorithm \ref{alg-okauc}), we have
\begin{align}
\|f_t\|_{\mathcal H} \le \frac{\sqrt2+\frac12}{\lambda}
\end{align}
\end{lemma}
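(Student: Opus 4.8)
The plan is to mimic the standard telescoping argument used for strongly convex online gradient descent with step size $\eta_t = \frac{1}{\lambda t}$, but to track the norm $\|f_t\|_{\mathcal H}$ directly rather than the regret. First I would record the basic facts: the per-round loss $L_t(f) = \frac12\lambda\|f\|_{\mathcal H}^2 + \Psi_t(f)$ is $\lambda$-strongly convex in $\mathcal H$, and by the kernel analogue of Lemma~\ref{lemm-pro-M}(3) the surrogate part has bounded gradient, $\|\nabla \Psi_t(f_t)\|_{\mathcal H} \le 3$ (this uses $k(\bx,\bx)\le 1$, which makes $\|k(\cdot,\bx_t)\|_{\mathcal H}\le 1$ and similarly for the averaged feature on the buffer). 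Hence the full gradient obeys $\|\nabla L_t(f_t)\|_{\mathcal H} \le \lambda\|f_t\|_{\mathcal H} + 3$.

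Next I would expand the update $f_{t+1} = f_t - \eta_t\nabla L_t(f_t) = (1-\lambda\eta_t)f_t - \eta_t\nabla\Psi_t(f_t)$ in norm. With $\eta_t = \frac{1}{\lambda t}$ we have $1-\lambda\eta_t = 1 - \frac1t = \frac{t-1}{t}$, so by the triangle inequality
\begin{align*}
\|f_{t+1}\|_{\mathcal H} \le \frac{t-1}{t}\|f_t\|_{\mathcal H} + \frac{1}{\lambda t}\cdot 3 = \frac{t-1}{t}\|f_t\|_{\mathcal H} + \frac{3}{\lambda t}.
\end{align*}
Multiplying through by $t$ gives $t\|f_{t+1}\|_{\mathcal H} \le (t-1)\|f_t\|_{\mathcal H} + \frac{3}{\lambda}$, which telescopes from $t=1$ (using $f_1=0$) to yield $t\|f_{t+1}\|_{\mathcal H} \le \frac{3t}{\lambda}$, i.e. a bound of $3/\lambda$. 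That is weaker than the claimed $\frac{\sqrt2+\frac12}{\lambda}$, so the crude triangle-inequality split is not tight enough, and the real work is in improving the gradient estimate or the recursion.

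The main obstacle, then, is sharpening the constant. The right move is probably not to bound $\|\nabla\Psi_t(f_t)\|$ by $3$ but to exploit the specific form $\nabla\Psi_t(f_t) = -\Phi_M(v_t)y_t(k(\cdot,\bx_t)-\bar k_t) + \phi_M(v_t)\frac{\Sigma_t f_t}{\sqrt{f_t^\top\Sigma_t f_t}}$ together with the contraction factor, bounding the two terms separately: the first is purely data-dependent with norm at most $2\Phi_M(v_t) \le 2$ (or even $\le \sqrt2$ after a finer estimate using $\|k(\cdot,\bx_t)-\bar k_t\|_{\mathcal H}^2 \le 1 + \sigma_t^2$-type identities), while the ``variance'' term has $\mathcal H$-norm $\phi_M(v_t)\sqrt{f_t^\top\Sigma_t f_t}/ \|\,\cdot\,\| \le \phi_M(v_t)\le \tfrac12$ and can be absorbed into the coefficient of $\|f_t\|_{\mathcal H}$. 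Carrying a recursion of the form $\|f_{t+1}\|_{\mathcal H}\le (1-\lambda\eta_t)\|f_t\|_{\mathcal H} + \eta_t\lambda\cdot c_1\|f_t\|_{\mathcal H}$-type correction plus $\eta_t\sqrt2$, and checking that the fixed point of the resulting linear recursion is $\frac{\sqrt2 + \frac12}{\lambda}$, should close the gap; one then verifies the bound by induction on $t$, the base case $f_1=0$ being immediate and the inductive step reducing to the algebraic inequality $(1-\tfrac1t)\frac{\sqrt2+\frac12}{\lambda} + \frac{1}{\lambda t}(\sqrt2 + \tfrac12\cdot\tfrac{\sqrt2+1/2}{\lambda}\cdot\lambda) \le \frac{\sqrt2+\frac12}{\lambda}$ or a similarly bookkept variant. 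I expect the delicate point to be getting the $\sqrt2$ rather than $2$ out of the data term and correctly accounting for the $\phi_M$ term so that the self-consistent constant is exactly $\sqrt2 + \tfrac12$.
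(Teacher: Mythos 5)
Your overall strategy is the right one and matches the paper's: expand the OGD update, split $\nabla\Psi_t(f_t)$ into a ``data'' piece and a ``variance'' piece, bound each, and telescope the resulting recursion $\|f_{t+1}\|_{\mathcal H}\le(1-\tfrac1t)\|f_t\|_{\mathcal H}+(\text{const})\cdot\tfrac{1}{\lambda t}$ from $f_1=0$. You also correctly diagnose that the crude bound $\|\nabla\Psi_t\|\le 3$ only yields $3/\lambda$. However, the final assembly has a genuine gap. The key fact you are missing is that the variance term is bounded by the \emph{absolute constant} $\tfrac12$, with no dependence on $\|f_t\|_{\mathcal H}$: in the kernel setting this term is $\frac{1}{2N_t\sqrt{A_t}}\sum_{i\in I_t}(f_t(\bx_i)-\mu_t)k(\cdot,\bx_i)$, and since $\frac{1}{N_t}\sum_{i}|f_t(\bx_i)-\mu_t|\le\sigma_t\le\sqrt{A_t}$ (Cauchy--Schwarz; the paper instead uses $|a|\le\tfrac12(1+a^2)$) and $\|k(\cdot,\bx_i)\|_{\mathcal H}\le1$, its norm is at most $\tfrac12$. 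Likewise the data piece is $\Phi_M(v_t)\cdot\frac{1}{N_t}\sum_i\bigl(k(\cdot,\bx_t)-k(\cdot,\bx_i)\bigr)$ with $\Phi_M(v_t)=\Psi_t/\sqrt{A_t}\le1$ and each difference of norm $\sqrt{k(\bx_t,\bx_t)-2k(\bx_t,\bx_i)+k(\bx_i,\bx_i)}\le\sqrt2$; your proposed route via ``$\|k(\cdot,\bx_t)-\bar k_t\|^2\le1+\sigma_t^2$'' is not the relevant identity ($\sigma_t$ is the variance of the \emph{predictions}, not of the kernel features) and does not by itself deliver $\sqrt2$. Adding the two pieces gives exactly $\|f_{t+1}\|_{\mathcal H}\le(1-\lambda\eta_t)\|f_t\|_{\mathcal H}+(\sqrt2+\tfrac12)\eta_t$, which telescopes to $(\sqrt2+\tfrac12)/\lambda$.

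Your alternative of ``absorbing'' the variance term into the coefficient of $\|f_t\|_{\mathcal H}$ both overcomplicates the recursion and, as written, fails: the inductive step you propose evaluates to
\begin{equation*}
\Bigl(1-\tfrac1t\Bigr)\tfrac{\sqrt2+\frac12}{\lambda}+\tfrac{1}{\lambda t}\Bigl(\sqrt2+\tfrac12(\sqrt2+\tfrac12)\Bigr)=\tfrac{\sqrt2+\frac12}{\lambda}+\tfrac{1}{\lambda t}\Bigl(\tfrac{\sqrt2}{2}-\tfrac14\Bigr)>\tfrac{\sqrt2+\frac12}{\lambda},
\end{equation*}
so the induction does not close. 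The error is a double count: having bounded the variance term by $\phi_M(v_t)\le\tfrac12$ as an absolute constant, you then additionally multiply it by $\|f_t\|_{\mathcal H}\approx(\sqrt2+\tfrac12)/\lambda$, inflating its contribution from $\tfrac12$ to roughly $0.96$. Dropping the absorption and simply carrying the additive constant $\sqrt2+\tfrac12$, as the paper does, repairs the argument.
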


The boundedness of the classifier norm established in Lemma \ref{lemma-bound-f} serves as a fundamental prerequisite for deriving the subsequent regret analysis. With this key property in hand, we now proceed to characterize the regret bound of the OKAUC-M algorithm under the ideal setting of infinite buffer size.

\begin{theorem}\label{The-regret-kernel-M}
Suppose that the assumptions of Lemma \ref{lemma-bound-f} hold. Then the regret bound of the algorithm OKAUC-M with infinite buffer size is given as 
\ben
Regret_T^{AUC}  \le  \frac{(2\sqrt2+1)^2}{2\lambda}(1+\ln T)
\enn
\end{theorem}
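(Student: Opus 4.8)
The plan is to mimic the standard Hazan--Agarwal--Kale logarithmic-regret argument for online gradient descent on $\lambda$-strongly convex losses, specializing it to the RKHS setting. Each per-round loss $L_t(f) = \tfrac12\lambda\|f\|_{\mathcal H}^2 + \Psi_t(f)$ is $\lambda$-strongly convex in $f$ (the regularizer contributes strong convexity, and $\Psi_t$ is convex by the RKHS analogue of Lemma~\ref{lemm-pro-M}(2)), so the first step is to record the strong-convexity inequality
\[
L_t(f_t) - L_t(f^\star) \le \langle \nabla L_t(f_t),\, f_t - f^\star\rangle_{\mathcal H} - \frac{\lambda}{2}\|f_t - f^\star\|_{\mathcal H}^2 .
\]
Next I would expand $\|f_{t+1}' - f^\star\|_{\mathcal H}^2 = \|f_t - \eta_t\nabla L_t(f_t) - f^\star\|_{\mathcal H}^2$ to solve for the inner product $\langle\nabla L_t(f_t), f_t - f^\star\rangle_{\mathcal H}$ in terms of $\tfrac{1}{2\eta_t}(\|f_t - f^\star\|^2 - \|f_{t+1}' - f^\star\|^2) + \tfrac{\eta_t}{2}\|\nabla L_t(f_t)\|_{\mathcal H}^2$. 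Here I would note that in the infinite-buffer regime the buffer-maintenance step never fires, so $f_{t+1} = f_{t+1}'$ and no projection/correction term appears.

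The second ingredient is a bound on the gradient norm $\|\nabla L_t(f_t)\|_{\mathcal H}$. Using the explicit gradient formula derived in the text, $\nabla L_t(f_t) = \lambda f_t - \tfrac{1}{\sqrt{A_t}} y_t \Psi_t\, k(\cdot,\bx_t) + \sum_{i\in I_t}\tfrac{1}{N_t\sqrt{A_t}}(y_t\Psi_t + \tfrac12(f_t(\bx_i)-\mu_t))k(\cdot,\bx_i)$, together with $\|f_t\|_{\mathcal H}\le(\sqrt2+\tfrac12)/\lambda$ from Lemma~\ref{lemma-bound-f}, $k(\bx,\bx)\le 1$, and the bounds $\Phi_M\le 1$, $|b_t/\sqrt{A_t}|\le 1$, $|(f_t(\bx_i)-\mu_t)|/\sqrt{A_t}\le 1$ (since $\sigma_t^2\le A_t$ and the centered prediction is at most $\sigma_t\sqrt{N_t}$... actually the clean bound comes from $\Psi_t/\sqrt{A_t} \le 1$ type estimates analogous to Lemma~\ref{lemm-pro-M}(3)) — I expect $\|\nabla L_t(f_t)\|_{\mathcal H} \le 2\sqrt2 + 1$. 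Plugging $\eta_t = 1/(\lambda t)$ into the telescoped sum and using the standard cancellation
\[
\sum_{t=1}^T\Big(\frac{1}{2\eta_t} - \frac{\lambda}{2}\Big)\|f_t - f^\star\|^2 - \frac{1}{2\eta_{t}}\|f_{t+1}-f^\star\|^2 \le 0
\]
(which holds precisely because $1/\eta_t - \lambda = \lambda(t-1) = 1/\eta_{t-1}$), the regret collapses to $\sum_{t=1}^T \tfrac{\eta_t}{2}\|\nabla L_t(f_t)\|^2 \le \tfrac{(2\sqrt2+1)^2}{2\lambda}\sum_{t=1}^T\tfrac1t \le \tfrac{(2\sqrt2+1)^2}{2\lambda}(1+\ln T)$, which is exactly the claimed bound.

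The main obstacle, and the step deserving the most care, is the gradient-norm estimate: unlike the linear case where Lemma~\ref{lemm-pro-M}(3) is quoted directly, here one must control the RKHS norm of the aggregate term $\sum_{i\in I_t}\tfrac{1}{N_t\sqrt{A_t}}(\cdots)k(\cdot,\bx_i)$, which is a sum of $N_t$ kernel atoms whose coefficients are individually $O(1/N_t)$ — so the triangle inequality gives $O(1)$ in $\ell_1$ but one must check this does not blow up in $\|\cdot\|_{\mathcal H}$; since $\|k(\cdot,\bx_i)\|_{\mathcal H}\le 1$, the triangle inequality in $\mathcal H$ directly yields the bound, but getting the constant to match exactly $2\sqrt2+1$ requires carefully reconciling the $\lambda\|f_t\|_{\mathcal H}\le\sqrt2+\tfrac12$ contribution with the $\le \tfrac32$ contribution from the loss-gradient part, i.e. verifying $(\sqrt2+\tfrac12)+\tfrac32 = \sqrt2+2 \le 2\sqrt2+1$. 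I would double-check whether the intended constant is $2\sqrt2+1$ or something slightly larger, and adjust the bookkeeping accordingly; everything else is routine telescoping.
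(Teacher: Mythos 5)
Your proposal follows essentially the same route as the paper's proof: $\lambda$-strong convexity plus the OGD expansion of $\|f_{t+1}-f^\star\|_{\mathcal H}^2$, telescoping with $\eta_t=1/(\lambda t)$ so the distance terms cancel, and the gradient bound $\|\nabla L_t(f_t)\|_{\mathcal H}\le 2\sqrt2+1$ with no buffer-correction term. The one bookkeeping detail to fix is the loss-gradient contribution: the paper groups $k(\cdot,\bx_t)$ with the averaged atoms $\frac{1}{N_t}\sum_{i\in I_t}k(\cdot,\bx_i)$ and uses $\|k(\cdot,\bx_t)-k(\cdot,\bx_i)\|_{\mathcal H}\le\sqrt2$ together with $\Psi_t/\sqrt{A_t}\le 1$ and $\frac{1}{N_t}\sum_{i\in I_t}|f_t(\bx_i)-\mu_t|/\sqrt{A_t}\le 1$, giving $\sqrt2+\tfrac12$ (not $\tfrac32$) for that part, so the total is exactly $(\sqrt2+\tfrac12)+(\sqrt2+\tfrac12)=2\sqrt2+1$.
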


In contrast to KOIL, which employs a localized instantaneous AUC loss estimated from the k-nearest opposite instances via pairwise hinge loss, our OKAUC-SOSL framework adopts a globalized measure by evaluating the AUC loss over the entire comparison subset $S_t$. Furthermore, while KOIL updates the weights of comparison instances uniformly, OKAUC-SOSL adjusts weights adaptively based on the discrepancy between the predictor output and its mean $\mu_t$, leading to a more nuanced and data-dependent optimization process. Theoretically, OKAUC-M achieves an $\mathcal{O}(lnT)$ regret bound under the assumption of an infinite buffer, with the learning rate set as following \citet{Hazan2007}. However, for practical computational efficiency, a fixed-size buffer is implemented in actual deployments.

\section{Experiments}\label{sec7}

In this section, we conduct some experiments to evaluate the performance of OAUC-SOSL on several class-imbalanced benchmark datasets.

\subsection{Compared Algorithms}

We compare the OAUC-SOSL algorithm with several state-of-the-art online AUC maximization algorithms. Since our focus is on online learning, we do not include existing batch AUC methods in the comparison to ensure fairness. The algorithms evaluated in our experiments are as follows:\\
\begin{itemize}
    \item \textbf{Perceptron}: the Perceptron algorithm~\citep{Rosenblatt1958};
    \item \textbf{PA-I}: the Passive-Aggressive algorithm (PA-I)~\citep{Crammer2006};
    \item \textbf{OAM}: the online linear AUC maximization algorithm using gradient descent updating~\citep{Zhao2011};
    \item \textbf{OAUC-S}: the One-Pass AUC (OPAUC) algorithm from~\citep{Gao2016}, i.e., the proposed online AUC maximization algorithm based on the second-order surrogate loss $\psi_{S}$;
    \item \textbf{OL-UBAUC}: the online version of the Univariate Bound of Area Under ROC algorithm~\citep{Lyu2018};
    \item \textbf{OAUC-M$_c$}: the proposed online AUC maximization algorithm based on the second-order surrogate loss $\psi_{M}$ with a constant step size;
    \item \textbf{OAUC-M}: the proposed online AUC maximization algorithm based on the second-order surrogate loss $\psi_{M}$ with a predefined step size $\eta_t = \frac{1}{\lambda t}$;
    \item \textbf{OKAUC-S}: the proposed online kernel AUC maximization algorithm based on the second-order surrogate loss $\psi_{S}$;
    \item \textbf{OKAUC-M}: the proposed online kernel AUC maximization algorithm based on the second-order surrogate loss $\psi_{M}$ with a predefined step size $\eta_t = \frac{1}{\lambda t}$;
    \item \textbf{KOIL}: the online non-linear AUC maximization algorithm proposed in~\citet{Hu2018}.
\end{itemize}

\subsection{Datasets}

We conduct our experiments on several benchmark datasets.  
All datasets can be downloaded from the LIBSVM website\footnote{\url{https://www.csie.ntu.edu.tw/~cjlin/libsvm/}} or the UCI Machine Learning Repository\footnote{\url{http://archive.ics.uci.edu/ml/datasets.html}}.  
The original multi-class datasets are converted into binary classification tasks.  
Since these algorithms are designed for class-imbalanced binary datasets, without loss of generality, we set the minority class as the positive class and the majority class as the negative class.  
We define the imbalance ratio (IR) as the ratio of the number of negative samples to the number of positive samples.  
In our experiments, the imbalance ratio of the datasets ranges from 1 to 50.  
Table~\ref{tab:datasets} presents the details of these datasets.  
All features are scaled to the range $[-1,1]$.

\begin{table}[h!]
\caption{Details of the datasets} 
\centering
\begin{tabular}{|c|c|c|c||c|c|c|c|}
\hline 
Dataset  &     Instances  &     Features  &     IR  &     dataset  &     Instances  &     Features  &     IR \\   \hline
splice  &     1000  &     60  &     1.07 & australian  &     690  &     14  &     1.25 \\   \hline
heart  &     270  &     13  &     1.25  & svmguide1  &     7089  &     4  &     1.29 \\   \hline
ionosphere  &     351  &     34  &     1.79  & fourclass  &     862  &     2  &     1.81 \\   \hline
magic04  &     19020  &     10  &     1.84  & diabetes  &     768  &     8  &     1.87 \\   \hline
german  &     1000  &     24  &     2.33  & vehicle  &     846  &     18  &     2.90 \\   \hline
svmguide3  &     1284  &     21  &     3.34  & segment  &     2310  &     19  &     6.00 \\   \hline
svmguide2  &     391  &     20  &     6.38  & satimage  &     6435  &     36  &     9.28 \\   \hline
vowel  &     990  &     10  &     10.00  &  letter  &     15000  &     16  &     26.88 \\   \hline
shuttle  &     43500  &     9  &     44.89 & poker  &     25010  &     10  &     47.75 \\   \hline
\end{tabular}
\label{tab:datasets}
\end{table}

\subsection{Cross-Validation}
\begin{sidewaystable}[htbp]
\caption{Evaluation of average AUC performance of linear algorithms \%}
\centering 
\centering
\begin{tabular}{|c|c|c|c|c|c|c|c|c|}
\hline\hline 
Datasets     &   Perceptron           &   $PA-I$             &  OAM                 &  OL-UBAUC             & OAUC-S               & OAUC-M$_c$              & OAUC-M       \\  \hline
fourclass    &   79.81  $\pm$ 2.05    &   82.81  $\pm$ 1.24  &   83.10  $\pm$ 1.16  &   82.63  $\pm$ 1.26  &   83.17  $\pm$ 1.12  &   83.14  $\pm$ 1.14  &{\bf 83.3 $\pm$ 1.14}\\  \hline
diabetes     &   81.13  $\pm$ 1.48    &   83.15  $\pm$ 0.88  &   82.55  $\pm$ 0.95  &   82.80  $\pm$ 1.16  &   83.16  $\pm$ 1.00  &{\bf 83.36$\pm$ 0.96} &   83.25  $\pm$ 0.92 \\  \hline
german       &   73.68  $\pm$ 2.47    &   79.11  $\pm$ 1.39  &   76.74  $\pm$ 1.68  &   75.32  $\pm$ 2.37  &   80.18  $\pm$ 1.33  &   80.18  $\pm$ 1.32  &{\bf 80.19$\pm$ 1.24}\\  \hline
splice       &   83.76  $\pm$ 1.08    &   89.00  $\pm$ 0.91  &   87.03  $\pm$ 1.02  &   87.76  $\pm$ 1.08  &   89.81  $\pm$ 0.59  &   90.24  $\pm$ 0.59  &{\bf 90.37$\pm$ 0.58}\\  \hline
svmguide1    &   91.29  $\pm$ 0.71    &   92.60  $\pm$ 0.44  &   98.92  $\pm$ 0.12  &{\bf 99.01$\pm$ 0.08} &   97.34  $\pm$ 0.19  &   98.54  $\pm$ 0.10  &   98.54  $\pm$ 0.10 \\  \hline
magic04      &   71.19  $\pm$ 0.54    &   74.29  $\pm$ 0.46  &   72.26  $\pm$ 0.89  &   73.73  $\pm$ 0.49  &   75.34  $\pm$ 0.29  &{\bf 75.70$\pm$ 0.27} &   75.61  $\pm$ 0.23 \\  \hline
australian   &   89.96  $\pm$ 1.99    &   92.25  $\pm$ 1.14  &   92.28  $\pm$ 1.15  &   90.87  $\pm$ 1.98  &   92.65  $\pm$ 1.03  &{\bf 92.66$\pm$ 1.00} &   92.62  $\pm$ 0.93 \\  \hline
heart        &   87.19  $\pm$ 2.46    &   90.76  $\pm$ 1.32  &   91.12  $\pm$ 1.47  &   89.72  $\pm$ 1.90  &   91.21  $\pm$ 1.26  &{\bf 91.29$\pm$ 1.26} &   91.18  $\pm$ 1.33 \\  \hline
ionosphere   &   88.72  $\pm$ 2.79    &   91.96  $\pm$ 1.32  &   94.05  $\pm$ 1.52  &   91.63  $\pm$ 1.90  &   93.28  $\pm$ 1.26  &{\bf 93.88$\pm$ 1.08} &   93.47  $\pm$ 1.05 \\  \hline
svmguide3    &   71.69  $\pm$ 1.85    &   71.41  $\pm$ 1.44  &   74.85  $\pm$ 2.30  &   74.81  $\pm$ 1.18  &   76.07  $\pm$ 1.08  &{\bf 76.23$\pm$ 1.05} &   75.65  $\pm$ 0.83 \\  \hline
svmguide2    &   87.19  $\pm$ 1.86    &   65.16  $\pm$ 3.36  &{\bf 88.90$\pm$ 1.49} &   84.51  $\pm$ 2.11  &   87.66  $\pm$ 1.77  &   87.57  $\pm$ 1.50  &   88.17  $\pm$ 1.49 \\  \hline
vehicle      &   74.11  $\pm$ 2.34    &   73.48  $\pm$ 2.77  &   81.30  $\pm$ 2.65  &   78.21  $\pm$ 2.53  &   82.81  $\pm$ 1.92  &{\bf 83.21$\pm$ 1.88} &   82.21  $\pm$ 1.95 \\  \hline
vowel        &   86.88  $\pm$ 1.81    &   83.40  $\pm$ 4.90  &   91.32  $\pm$ 1.17  &   90.90  $\pm$ 1.38  &   91.53  $\pm$ 1.37  &{\bf 91.72$\pm$ 1.37} &   91.57  $\pm$ 1.38 \\  \hline
segment      &   85.47  $\pm$ 1.42    &   70.35  $\pm$ 4.94  &   89.94  $\pm$ 0.95  &   88.99  $\pm$ 1.03  &   87.22  $\pm$ 0.70  &   88.76  $\pm$ 0.60  &   88.52  $\pm$ 0.73 \\  \hline
satimage     &   70.81  $\pm$ 1.62    &   73.72  $\pm$ 1.39  &   75.08  $\pm$ 0.78  &   73.68  $\pm$ 1.52  &   75.35  $\pm$ 0.76  &{\bf 76.14$\pm$ 0.76} &   75.69  $\pm$ 0.74 \\  \hline
poker        &   52.28  $\pm$ 0.58    &   52.14  $\pm$ 0.42  &   50.47  $\pm$ 0.33  &   52.15  $\pm$ 0.65  &   51.84  $\pm$ 0.49  &   52.53  $\pm$ 0.47  &{\bf 53.57$\pm$ 0.39}\\  \hline
letter       &   75.28  $\pm$ 1.74    &   70.93  $\pm$ 3.73  &   80.53  $\pm$ 0.64  &   79.29  $\pm$ 0.65  &   82.98  $\pm$ 0.44  &{\bf 83.33$\pm$ 0.48} &   83.29  $\pm$ 0.44 \\  \hline
shuttle      &   92.08  $\pm$ 0.50    &   75.92  $\pm$ 0.47  &{\bf 96.53$\pm$ 0.32} &   93.20  $\pm$ 0.43  &   95.66  $\pm$ 0.20  &   94.99  $\pm$ 0.38  &   95.19  $\pm$ 0.46 \\  \hline
\end{tabular}
\label{tab:linear_auc}
\end{sidewaystable}
For each dataset, we randomly split it into 5 folds.  
Four folds are used for training, and the remaining fold is used for testing.  
To reduce the variance of the results, we generate 4 independent 5-fold partitions for each dataset, resulting in a total of 20 runs per dataset.  
The reported AUC values are averaged over these 20 runs.  
Five-fold cross-validation is performed on the training sets to select the regularization parameter $\lambda \in 2^{[-10,10]}$ for all algorithms.  
The regret bounds in OPAUC (OAUC-S), OAUC-M, and OL-UBAUC suggest the optimal learning rate; however, in OPAUC (OAUC-S), the optimal learning rate depends on the optimal loss $L^{\star}$, which is unknown in advance.  
To ensure fair comparisons, we determine the learning rate $\eta_t \in 2^{[-10,10]}$ for all algorithms via cross-validation (except for UBAUC, for computational efficiency).  
We also run OAUC-M with a learning rate $\eta_t = \frac{1}{\lambda t}$.  
In OAM, the sizes of the positive and negative buffers are both fixed at 100, as recommended in~\citet{Zhao2011}.  
In the kernel-based methods, the positive and negative support buffer sizes are also fixed at 100.  
The number of nearest instances is set to 10, following~\citep{Hu2018}.  
For the kernel methods, we use the Gaussian kernel  
\[
k(\bx,\bx') = \exp\left(-\frac{\|\bx-\bx'\|^2}{\sigma^2}\right),
\]  
and the kernel width parameter $\sigma$ is selected via five-fold cross-validation with $\sigma \in 2^{[-10,10]}$. 
Due to memory limitations, in kernel-based experiments, if the training set size exceeds 10,000, we uniformly sample 10,000 training examples at random (without replacement) from the full training set.

\subsection{Performance Evaluation}
The average AUC performances of the online linear AUC optimization algorithms and the online kernel-based AUC optimization algorithms are presented in Table~\ref{tab:linear_auc} and Table~\ref{tab:kernel_auc}, respectively.  
From the experimental results, we make the following observations:  
\begin{enumerate}
    \item The proposed OAUC-M and OAUC-M$_c$ algorithms, based on the second-order surrogate loss $\psi_{M}$, achieve competitive or superior performance compared with other state-of-the-art online AUC optimization algorithms.
    \item On most datasets, kernel-based algorithms achieve better AUC performance, demonstrating the effectiveness of these methods in real-world AUC learning problems. In particular, compared with the KOIL algorithm, kernel algorithms based on the second-order surrogate loss $\psi_{M}$ are more stable and effective.
\end{enumerate}

\begin{table}[h!]
\caption{Evaluation of average AUC performance of kernel-based algorithms \%}
\centering
\begin{tabular}{|c|c|c|c|}
\hline\hline 
Datasets   &     KOIL              &     OKAUC-M            &  OKAUC-S             \\  \hline
fourclass  &     98.96 $\pm$ 1.27  &     99.94 $\pm$ 0.08   &{\bf 99.95 $\pm$ 0.03}\\  \hline
diabetes   &     82.68 $\pm$ 1.29  &{\bf 84.55 $\pm$ 1.28}  &     84.43 $\pm$ 1.28 \\  \hline
german     &{\bf 81.56 $\pm$ 1.24} &     80.90 $\pm$ 1.43   &     81.20 $\pm$ 1.27 \\  \hline
splice     &     90.75 $\pm$ 2.05  &{\bf 91.32 $\pm$ 1.21}  &     90.92 $\pm$ 1.68 \\  \hline
svmguide1  &     98.94 $\pm$ 0.13  &     99.04 $\pm$ 0.08   &{\bf 99.10 $\pm$ 0.07}\\  \hline
magic04    &     77.27 $\pm$ 0.78  &     77.34 $\pm$ 1.20   &{\bf 77.69 $\pm$ 0.70}\\  \hline
australian &{\bf 92.16 $\pm$ 1.11} &     91.76 $\pm$ 1.00   &     91.88 $\pm$ 0.85 \\  \hline
heart      &     94.48 $\pm$ 1.80  &{\bf 97.62 $\pm$ 0.46 } &     96.49 $\pm$ 2.39 \\  \hline
ionosphere &     97.52 $\pm$ 1.51  &{\bf 99.18 $\pm$ 0.30}  &     99.12 $\pm$ 0.40 \\  \hline
svmguide3  &     76.04 $\pm$ 2.59  &{\bf 78.53 $\pm$ 1.34}  &     77.32 $\pm$ 2.26 \\  \hline
svmguide2  &     89.20 $\pm$ 3.28  &{\bf 96.87 $\pm$ 1.00}  &     95.64 $\pm$ 1.43 \\  \hline
vehicle    &     83.59 $\pm$ 1.69  &{\bf 86.78 $\pm$ 1.22}  &     85.98 $\pm$ 1.21 \\  \hline
vowel      &     98.60 $\pm$ 0.87  &{\bf 99.99 $\pm$ 0.01}  &     99.91 $\pm$ 0.19 \\  \hline
segment    &     97.82 $\pm$ 0.54  &{\bf 98.53 $\pm$ 0.20}  &     98.51 $\pm$ 0.23 \\  \hline
satimage   &     92.19 $\pm$ 1.03  &{\bf 94.16 $\pm$ 1.22}  &     93.81 $\pm$ 1.55 \\  \hline
poker      &     84.90 $\pm$ 0.84  &     85.17 $\pm$ 1.11   &{\bf 85.22 $\pm$ 1.22}\\  \hline
letter     &     87.56 $\pm$ 5.21  &     95.90 $\pm$ 0.50   &{\bf 96.09 $\pm$ 0.51}\\  \hline
shuttle    &     98.45 $\pm$ 0.45  &     99.59 $\pm$ 0.13   &{\bf 99.60 $\pm$ 0.13} \\ \hline
\end{tabular}
\label{tab:kernel_auc}
\end{table}

\section{Conclusion}\label{sec8}

This paper has investigated the problem of online AUC maximization for class-imbalanced binary classification. Unlike conventional approaches that rely on instant-wise pairwise convex surrogate losses, we have introduced a novel paradigm that directly substitutes the entire AUC risk (aggregated pairwise loss) using statistical moments. 

A central contribution of this work is the derivation of the second-order surrogate loss $\psi_M$ from a robust optimization perspective under moment constraints. By framing the worst-case hinge loss over all distributions sharing the same first and second-order moments, we obtain a tractable and theoretically justified upper bound that depends only on the mean and covariance of the data. This formulation not only facilitates efficient online learning but also opens promising avenues for extending the approach to distributionally robust optimization (DRO) settings, where more general uncertainty sets or divergence measures could be incorporated to enhance model robustness against distribution shifts\citep{Rahimian2019,Kuhn2025}.

\textbf{Future work:} Despite its advantages, the proposed second-order surrogate loss framework exhibits certain limitations that warrant further investigation. The current formulation requires explicit computation and storage of the covariance matrix, which becomes prohibitive in high-dimensional settings. This limitation restricts its direct applicability to large-scale feature spaces or deep learning architectures where dimensionality is substantial. Inspired by the SOLAM algorithm, a promising direction for future research involves reformulating the hinge-based AUC optimization problem as a stochastic saddle point problem based on $\phi_M$. Such a transformation could potentially eliminate the need for explicit covariance computation while preserving the statistical benefits of the second-order surrogate loss. By adopting a primal-dual optimization framework similar to SOLAM, we may achieve linear space and time complexity in the stochastic setting, thereby extending the applicability of our method to high-dimensional and deep learning scenarios without sacrificing theoretical guarantees.

In conclusion, the second-order surrogate loss proposed in this paper offers a novel and efficient strategy for online AUC maximization. We believe this work not only advances the state of online learning in imbalanced classification but also provides a foundation for future research in robust and large-scale pairwise learning problems.

\section*{Appendix}
\subsection*{Proof of Lemma \ref{lemma-Bound-H}}
\begin{proof}
Let $\mathcal{C}_k$ denote the subset of $\mathcal{C}$ containing exactly $k$ components no greater than 1, i.e.,
$$\mathcal{C}_k = \{\mathbf{z} \in \mathcal{C}, \text{ where } \sum_{i=1}^n \mathbb I(z_i \le 1) = k\}, 0 \le k \le n.$$
Since $\mathcal{C} = \bigcup_{k=0}^n \mathcal{C}_k$, an upper bound for $\ell(\mathbf{z})$ on $\mathcal{C}$ can be obtained by bounding $\ell(\mathbf{z})$ on each $\mathcal{C}_k$. Define the restricted loss function on $\mathcal{C}_k$ as
\ben
\ell_k(\mathbf{z}) = \frac 1n \sum_{\mathbf{z} \in C_k} \max(0,1-z_i).
\enn
Then 
\ben
\max_{\mathbf{z} \in \mathcal{C} } \ell(\mathbf{z}) = \max_{k \in [n]}{\max_{\mathbf{z} \in \mathcal{C}_k }\ell_k(\mathbf{z}) } = \max_{k \in [n]} g(k)
\enn
The proof is structured into four parts: In Part 1, we derive the expression for $g(k) = \max_{\mathbf{z} \in \mathcal{C}_k} \ell_k(\mathbf{z})$; in Parts 2 and 3, we maximize $g(k)$ for $\mu < 1$ and $\mu \ge 1$, respectively; Part 4 concludes the proof.

\noindent \textbf{Part 1.}
For $k = 0$, we have $\ell_0(\mathbf{z}) = 0$ for all $\mathbf{z} \in \mathcal{C}_0$; for $k = n$, $\ell_n(\mathbf{z}) = 1 - \mu$ for all $\mathbf{z} \in \mathcal{C}_n$.

Now consider $0 < k < n$. Without loss of generality, assume that for $\mathbf{z} \in \mathcal{C}_k$, the first $k$ components satisfy $z_i \le 1$ for $i = 1, \dots, k$, and the remaining $z_j > 1$ for $j = k+1, \dots, n$. Then,
 $$\ell_k(\mathbf{z}) = \frac{1}{n}\sum_{i=1}^n \max(0,1-z_i) = \frac{1}{n} \sum_{i=1}^k (1-z_i).$$
Maximizing $\ell_k(\mathbf{z})$ is equivalent to minimizing $\sum_{i=1}^k z_i$ subject to the constraints:
\ben
\min   && \sum_{i=1}^k z_i \\
\text{s.t.} && \sum_{i=1}^n (z_i-\mu) = 0 \\ 
            && \sum_{i=1}^n (z_i-\mu)^2 = n\sigma^2 \\
      && z_i \le 1 ,i=1,2,\cdots,k  \\ 
      && z_j > 1 ,j=k+1,\cdots,n.
\enn
The Lagrange function is:
\ben
L(\mathbf{z};\lambda_1,\lambda_2,\lambda_3^i,\lambda_4^j) =&& \sum_{i=1}^k z_i + \lambda_1 \sum_{i=1}^n (z_i -\mu) +\lambda_2 (\sum_{i=1}^n (z_i-\mu)^2 - n\sigma^2) \\
 && + \sum_{i=1}^k \lambda_3^{i}(z_i-1) + \sum_{j=k+1}^n \lambda_4^{j}(1-z_j).
\enn
The KKT conditions are:
\begin{align}
& \frac{\partial L}{\partial z_i} = 1 + \lambda_1 + 2\lambda_2(z_i- \mu) + \lambda_3^i = 0 ,\qquad i = 1,2,...,k  \label{eq:kkt1} \ \\
& \frac{\partial L}{\partial z_j} = \lambda_1 + 2 \lambda_2(z_j- \mu) - \lambda_4^j = 0 ,\qquad j = k+1,k+2,...,n  \label{eq:kkt2} \\\
& \sum_{i=1}^n (z_i-\mu) = 0 ,\  \sum_{i=1}^n (z_i-\mu)^2 = n\sigma^2  \label{eq:kkt3} \ \\
& \lambda_3^i(z_i-1) = 0 , \  z_i \le 1, \ \lambda_3^{i} \ge 0 ,\qquad i = 1,2,\cdots,k  \label{eq:kkt4} \ \\
& \lambda_4^j(1 - z_j) = 0 , \  z_j  > 1, \  \lambda_4^{j}  = 0 , \qquad j = k+1,k+2,\cdots,n.  \label{eq:kkt5} \
\end{align}

We claim $\lambda_2 \neq 0$. Suppose $\lambda_2 = 0$, then from \eqref{eq:kkt2} and \eqref{eq:kkt5}, there are $\lambda_1 = 0$ and $\lambda_4^{j}  = 0 $. Then \eqref{eq:kkt1} gives $1 + \lambda_3^i > 0$, contradicting the stationarity condition. Hence, $\lambda_2 \neq 0$.

From \eqref{eq:kkt1} and \eqref{eq:kkt2}:
\begin{align*}
 && z_i- \mu = -\frac1{2\lambda_2}(1+\lambda_1+\lambda_3^i)  ,\qquad i = 1,2,\cdots ,k,\\
 && z_j- \mu = -\frac1{2\lambda_2}\lambda_1 ,\qquad j = k+1,k+2,...,n.
\end{align*}
Substituting into \eqref{eq:kkt3}:

\begin{align}
\sum_{i=1}^n (z_i-\mu) = -\frac1{2\lambda_2}(k+n\lambda_1+\sum_{i=1}^k \lambda_3^i) = 0, \label{KKT-sum1}
\end{align}
 and
\begin{align}
\sum_{i=1}^n (z_i-\mu)^2 = \frac1{4\lambda_2^2}[\sum_{i=1}^k (1+\lambda_1+\lambda_3^i)^2 +  \sum_{i=1+k}^n\lambda_1^2] = n\sigma^2. \label{KKT-sum2}
\end{align}
Then the KKT condition can be rewritten as
 \begin{align}\label{KKT2}
 \begin{split}
& k+n\lambda_1+\sum_{i=1}^k \lambda_3^i -\sum_{i=k+1}^n \lambda_4^j = 0 \\
& \sum_{i=1}^k (1+\lambda_1+\lambda_3^i)^2 +  \sum_{i=1+k}^n(\lambda_1-\lambda_4^j)^2 = 4\lambda_2^2 n \sigma^2 \\
& z_i = \mu - \frac1{2\lambda_2}(1+\lambda_1+\lambda_3^i) \le 1   ,\qquad i = 1,2,...,k \\
& z_j = \mu - \frac1{2\lambda_2}(\lambda_1-\lambda_4^j) \ge 1   ,\qquad j = k+1,k+2,...,n \\
& \lambda_3^i(z_i-1) = 0 , \  z_i \le 1, \ \lambda_3^{i} \ge 0 ,\qquad i = 1,2,\cdots,k\\
& \lambda_4^j(1 - z_j) = 0 , \  z_j  > 1, \  \lambda_4^{j}  = 0 , \qquad j = k+1,k+2,\cdots,n.
\end{split}
\end{align}
Because $\lambda_4^{j}  = 0$ for all $j = k+1,k+2,\cdots,n$, there is $z_j = q_2 > 1$. 

If there exist $i_1, i_2$ such that $\lambda_3^{i_1} = 0$ and $\lambda_3^{i_2} > 0$, then:
$$z_{i_1} = \mu -\frac1{2\lambda_2}(1+\lambda_1+\lambda_3^{i_1}) =  \mu -\frac1{2\lambda_2}(1+\lambda_1) \le 1,$$
and 
$$z_{i_2} = \mu -\frac1{2\lambda_2}(1+\lambda_1+\lambda_3^{i_2}) = z_{i_1} - \frac1{2\lambda_2}\lambda_3^{i_2} = 1.$$

Then $\frac{1}{2\lambda_2} \lambda_3^{i_2} \le 0$, and since $\lambda_3^{i_2} \ge 0$, we must have $\lambda_2 < 0$. But then for any $j$, $z_j = z_{i_1} + \frac{1}{2\lambda_2} < 1$, contradicting $z_j > 1$. Thus, all $\lambda_3^i$ are either all zero or all nonzero.

\noindent \textbf{Case 1:} $\lambda_3^i = 0$ for all $i = 1, \dots, k$.
According to \eqref{eq:kkt4}, We have $z_1=z_2=\cdots=z_k=q_1$, and $q_1 = \mu -\frac1{2\lambda_2}(1+\lambda_1) = q_2 - \frac1{2\lambda_2} < q_2$. So $\lambda_2>0$. 
From \eqref{KKT-sum1} and \eqref{KKT-sum2},  we have
\ben
&& k + n \lambda_1 =0, \\
&& k(1+\lambda_1)^2 + (n-k)\lambda_1^2 = 4\lambda_2^2 n \sigma^2.
\enn
Solving this equations, we obtain $\lambda_2 = \frac{\sqrt{(n-k)k}}{2n\sigma}$ and $\lambda_1 =-\frac kn$. $q_1,q_2$ are given as 
\ben
&& q_1 = \mu - \frac1{2\lambda_2}(1+\lambda_1)  = \mu - \sqrt{\frac{n-k}{k}}\sigma \le 1, \\
&& q_2 = \mu - \frac1{2\lambda_2}\lambda_1  = \mu + \sqrt{\frac{k}{n-k}}\sigma > 1.
\enn
So $k$ should satisfy the following condition:
\ben
\mu \ge 1, 0 < k \le \frac{n}{1+v^2}; \  \mu < 1, \frac{n v^2}{1+v^2} < k < n. 
\enn

\noindent \textbf{Case 2:} $\lambda_3^i \neq 0$ for all $i = 1, \dots, k$.

Then $z_i = 1$ for $i = 1, \dots, k$, and from the constraints, there is 
\ben
&& k(1-\mu)+(n-k)(q_2-\mu) = 0, \\
&& k(1-\mu)^2+(n-k)(q_2-\mu)^2 = n \sigma^2.
\enn
Solving these equations, we have $q_2 =  \mu + \sqrt{\frac{k}{n-k}}\sigma $, where $\mu \ge 1$ and $k= \frac{n}{1+v^2}$. 
Also, we have $q_1 = \mu - \sqrt{\frac{n-k}{k}}\sigma =1$.

In summary, we have $z_i = q_1 = \mu - \sqrt{\frac{n-k}{k}}\sigma $ and 
 \ben
g(k) &=&  \frac kn - \frac 1n \sum_{i=1}^k z_i =  \frac kn\left(1-\mu + \sqrt{\frac{n-k}{k}}\sigma\right) = \frac 1n(k(1-\mu) + \sqrt{k(n-k)}\sigma). 
 \enn
If $\mu \ge 1$, $k$ can be zero, and its loss $ g(0) = 0$. If $\mu < 1$, $k$ can be $n$, and its loss $g(n) = 1-\mu$. Then the upper bound of $\ell_k(\mathbf{z})$ on the subset $\mathcal{C}_k$ is
\ben
g(k) =  \frac 1n(k(1-\mu) + \sqrt{k(n-k)}\sigma),
 \enn
 and $k$ satisifies 
\ben
\mu \ge 1, 0 \le k \le \frac{n}{1+v^2}; \  \mu < 1, \frac{nv^2}{1+v^2} < k \le n. 
\enn. 

\noindent \textbf{Part 2.}

In this part we optimize $g(k)$ when $\mu < 1$. Taking derivation of k, we have
\ben
g^{'}(k)= \frac 1n(1-\mu + \frac{n-2k}{2\sqrt{(n-k)k}}\sigma),
\enn
and 
\ben
g^{''}(k) = -\frac{\sigma}{2n} \frac{(4k(n-k)+(n-2k)^2)}{(k(n-k))^{\frac32}}= -\frac{\sigma}{2} \frac{n}{(k(n-k))^{\frac32}}.
\enn
Then $ g(k)$ is concave on $0 \le k \le n$ and has a only one optimal maximization point. Suppose the optimal point is $k^{\star}$ and $k^{\star}$ is not need to be an integer. Notice that $\lim_{k \to 0}g(k) = \infty$ and $\lim_{k \to n}g(k) = -\infty$. According to the first-order optimal condition, $g^{'}(k^\star) = 0$. Then $k^{\star}$ can be solved by the following equation\\
\ben
\frac{1-\mu}{\sigma} = \frac12  \frac{(2k-n)}{\sqrt{(n-k)k}}.
\enn
Squaring the both sides of the equality and rearranging the equation, we have
\ben
4(1+v^2)k^2 - 4n(1+v^2)k +n^2 = 0 \text{ and } \  k > \frac n2.
\enn
Then we get the value of $k^{\star}$
\ben
k^{\star} &=& \frac{4n(1+v^2)+\sqrt{16n^2(1+v^2)^2-16(1+v^2)n^2}}{8(1+v^2)}=\frac n2 + \frac n2 \sqrt{\frac{v^2}{1+v^2}} = \frac{n}{2}\left(1+\frac{v}{\sqrt{1+v^2}}\right).
 \enn
When $\mu < 1$ , $k^{\star} > \frac{n v^2}{1+v^2}$, otherwise, if $k^{\star} \le \frac{n v^2}{1+v^2}$, that is $1 + \sqrt{\frac{v^2}{1+v^2}} \le \frac{2 v^2}{1+v^2}$, denote by $t=\sqrt{\frac{v^2}{1+v^2}}$. The above equation translate to  $2t^2-t-1=(2t+1)(t-1) \ge 0$. but it is not true when $t \in [0,1)$. So $k^{\star} > \frac{n v^2}{1+v^2}$. Pluging $k^{\star}$ into $g(k)$, we have,
 \ben
g(k^\star) &=& \frac 1n(k^{\star}(1-\mu) + \sqrt{k^{\star}(n-k^{\star})\sigma})\\
&=& \frac{1}{2}\left(1+\frac{v}{\sqrt{1+v^2}}\right)(1-\mu) + \frac12 \sqrt{(1+\frac{v}{\sqrt{1+v^2}})(1-\frac{v}{\sqrt{1+v^2}})} \sigma  \\
&=& \frac12 \left(1+ \frac{v}{\sqrt{1+v^2}} \right)(1-\mu)+\frac12\sqrt{\frac{1}{1+v^2}} \sigma \\
&=& \Phi_M(v)(1-\mu) + \phi_M(v)\sigma.
 \enn

\noindent \textbf{Part 3.}

Similarly, we optimize $g(k)$ when $\mu \ge 1$ in this part. And we have,
\ben
g(k^\star) &=& \Phi_M(v)(1-\mu) + \phi_M(v)\sigma.
 \enn

\noindent \textbf{Part 4.}

Then $\ell(\mathbf{z})$ is upper bounded by $\psi_{M}(\mu,\sigma)$, where 
\ben
\psi_{M}(\mu,\sigma) & = & \Phi_M(v)(1-\mu) + \phi_M(v)\sigma \\
& = & \frac12 \left(1+ \frac{v}{\sqrt{1+v^2}} \right)(1-\mu)+\frac12\sqrt{\frac{1}{1+v^2}}\sigma  \\
& = & \frac12 \left(1+\frac{1-\mu}{\sigma}\sqrt{\frac{1}{1+(\frac{1-\mu}{\sigma})^2}}\right)(1-\mu)+\frac12\sqrt{\frac{1}{1+(\frac{1-\mu}{\sigma})^2}}\sigma  \\
& = & \frac 12 (1-\mu) + \frac 12 \sqrt{(1-\mu)^2+ \sigma^2}.
\enn
\end{proof}

\subsection*{Proof of Lemma \ref{lemm-pro-M}}
\begin{proof}\\
\noindent
Denote by $\hat \bx = y(\bx-\bar \bx),\sigma = \sqrt{\bw^\top\Sigma \bw}$. Following the definition of $\Psi_M$, we have 
\ben
\Psi_{M}(\bw) =  \Phi_{M}\left(v \right)(1-\bw^\top \hat \bx) + \phi_M\left(v \right) \sigma.
\enn
\begin{enumerate}
    \item Taking derivative to the function $\Psi_{M}(\bw)$, we have 
\ben
\nabla \Psi_{M}(\bw) &&= -\hat \bx \Phi_M(v)+(1-\bw^\top \hat \bx) \Phi_M'(v) \nabla v + (\nabla \sigma) \phi_M(v)  + \sigma \phi_M'(v) \nabla v\\
&&= -\hat \bx \Phi_M(v)+(\nabla \sigma) \phi_M(v) + \sigma(v\Phi_M'(v)+\phi_M'(v))\nabla v.
\enn
It then follows that
\ben
\nabla \sigma &= \nabla \sqrt{\bw^\top\Sigma \bw}= (\bw^\top\Sigma \bw)^{-\frac12} \Sigma \bw=\frac{1}{\sigma}\Sigma \bw ,
\enn
and
\ben
v\Phi_M'(v)+\phi_M'(v)=\half v (1+v^2)^{-\frac32} - \half v (1+v^2)^{-\frac32} = 0.
\enn
Then we have
\ben
\nabla \Psi_{M}(\bw) = -\hat x \Phi_M(v)+(\nabla \sigma) \phi_M(v) = - \Phi_M(v) y(x-\bar x)  + \phi_M(v) \frac{\Sigma \bw}{\sqrt{\bw^\top\Sigma \bw}}.
\enn

\noindent
\item Taking derivative to the function $\nabla \Psi_{M}(w)$
\ben
\nabla^2 \Psi_{M}(\bw) &&= -\frac{\partial \Phi_M(v)}{\partial \bw} \hat x^\top + \frac{\partial \phi_M(v)/\sigma }{\partial \bw} (\Sigma \bw )^\top +  \frac{\phi(v)}{\sigma} \frac{\partial \Sigma \bw }{\partial \bw}\\
&&= -\Phi_M'(v) \nabla v\hat \bx^\top + \frac{1}{\sigma^2}\left(\sigma \phi_M'(v)\nabla v - \frac{\phi_M(v)}{\sigma} \Sigma \bw\right)(\Sigma \bw)^\top + \frac{\phi_M(v)}{\sigma} \Sigma \\
&&= \frac{\phi_M(v)}{\sigma} \left(\Sigma - \frac{(\Sigma \bw)(\Sigma \bw)^\top}{\sigma^2}\right) + \frac{\nabla v }{\sigma} \left(-\sigma\Phi_M'(v)\hat \bx^\top-\Phi_M'(v)v(\Sigma \bw)^\top \right) \\
&&= \frac{\phi_M(v)}{\sigma} \left(\Sigma - \frac{(\Sigma \bw)(\Sigma \bw)^\top}{\sigma^2}\right) + \frac{\Phi_M'(v)}{\sigma} (\sigma \nabla v)(\sigma \nabla v)^\top.
\enn
Denote by $ M = \Sigma -\frac 1 {\sigma^2} {(\Sigma \bw)(\Sigma \bw)^\top}$. If  $M$ is a positive semidefinite matrix, then the hessian matrix $\nabla^2 \Psi(\bw)$ is also positive semidefinite. Next we show that $M$ is a positive semidefinite matrix. Given any $\bw_0 \in \mathbb R^p$,
\ben
\bw_0^\top M \bw_0 = \frac{1}{\sigma^2}\left(\sigma^2 \bw_0^\top\Sigma \bw_0 - \bw_0^\top(\Sigma \bw)(\Sigma \bw)^\top \bw_0\right).
\enn
Since $\Sigma$ is positive semidefinite, there exists an orthogonal matrix $P$ subject to $ \Sigma = P^\top\Lambda P$,where $\Lambda = diag(\lambda_i)_{p \times p}$. $\lambda_1 \ge \lambda_2 \ge \cdots \lambda_p \ge 0 $ are the eigenvalues of the covariance matrix and the i-th row of matrix $P$ is the corresponding eigenvector to the eigenvalue $\lambda_{i}$.

Denote by $\hat \bw = P\bw$ and $\hat \bw_0 = P \bw_0$. Then
\ben
\bw_0^\top M \bw_0 &&= \frac{1}{\sigma^2}\left(\sigma^2 \bw_0^\top\Sigma \bw_0 - \bw_0^\top(\Sigma \bw)(\Sigma \bw)^\top \bw_0\right) \\
&& = \frac{1}{\sigma^2}\left((\sum_{i=1}^{p}\lambda_i \hat \bw_i^2)(\sum_{i=1}^{p}\lambda_i \hat {\bw_0}_i^2) - (\sum_{i=1}^{p}\lambda_i \hat {\bw_0}_i\hat {\bw}_i)^2  \right) \\
&& \ge 0.
\enn
The last inequation can be deduced by the Cauchy-Schwartz inequality. Then the Hessian matrix $\nabla^2 \Psi_{M}(w)$ is positive semidefinite and $\Psi_{M}(\bw)$ is a convex function with respect to $w$.

\noindent
\item If $\forall \bx , \|\bx\| \le 1$, we have $\|\bx-\bar \bx\| \le 2$. And we have 
\ben
\left \|\frac{\Sigma \bw}{\sigma} \right \|^2 = \frac{\bw^\top\Sigma^\top\Sigma \bw}{ \bw^\top\Sigma \bw} = \frac{(P\bw)^\top \Lambda P P^\top \Lambda P \bw}{(P \bw)^\top \Lambda P \bw} = \frac{(P \bw)^\top \Lambda^2 P \bw}{(P \bw)^\top \Lambda P \bw} = \frac{\sum_{i=1}^p (\lambda_i \hat \bw_i)^2}{\sum_{i=1}^p \lambda_i \hat \bw_i^2}\le \frac{\sum_{i=1}^p \lambda_1 \lambda_i \hat \bw_i^2}{\sum_{i=1}^p \lambda_i \hat \bw_i^2} \le  \lambda_1.
\enn
In the light of the properties of eigenvalues, we have
\ben
\lambda_1 \le \sum_{i=1}^p \lambda_i = tr(\Sigma) = tr\left(\frac 1{n} \sum_{i=1}^n(\bx_i-\bar \bx)(\bx_i-\bar \bx)^\top \right) = tr \left(\frac 1{n} \sum_{i=1}^n(\bx_i-\bar \bx)^\top(\bx_i-\bar \bx) \right) \le 4
\enn
Then we have
\ben
\|\nabla \Psi_{M}(\bw)\| =  \left \|- \Phi_M(v) y(\bx-\bar \bx)  + \phi_M(v) \frac{\Sigma \bw}{\sigma} \right\| \le \Phi_M(v) \| \bx-\bar \bx\| + \phi_M(v) \left \|\frac{\Sigma \bw}{\sigma} \right \| \le 3
\enn
\end{enumerate}
\end{proof}

\subsection*{Proof of Theorem \ref{regretofOAUC2}}
\begin{proof}
For simplicity, denote by $\Psi_t(\bw)= \psi_{M}(y_t\bw^\top(\bx_t-\bar \bx_t);\bw^\top \Sigma_t \bw)$. 
According to the Lemma \ref{lemm-pro-M}, we have $\| \nabla \Psi_t(\bw) \| \le C = 3$. 
Using the triangle inequality, we have
\ben
\|\bw_{t+1}\| = \|(1-\eta_t \lambda)\bw_t - \eta_t \nabla \Psi_t(\bw_t))\| \le (1- \lambda \eta_t )\|\bw_t \|  + \eta_t C.
\enn
By expanding $\|\bw_{t+1}\|$ iteratively and setting $\eta_t = \frac{1}{\lambda t} $, we have
\ben
\|\bw_{t+1}\| && \le \eta_t C + (1-\lambda \eta_t)\eta_{t-1}C + (1-\lambda \eta_t)(1-\lambda \eta_{t-1})\eta_{t-2}C + \cdots \\ 
&&+ (1-\lambda \eta_t)(1-\lambda \eta_{t-1})\cdots (1-\lambda \eta_2)\eta_1C \\
&& = C\left(\frac{1}{\lambda t} + \frac{t-1}{t}\frac{1}{\lambda(t-1)}+\cdots + \frac{t-1}{t}\frac{t-2}{t-1}\cdots\frac{1}{2}\frac{1}{\lambda} \right)\\
&& \le \frac{C}{\lambda}.
\enn
Then we can upper bound the gradient of the t-th AUC loss,
\ben
\| \nabla L_t(\bw_t) \|^2  = \| \lambda \bw_t + \nabla \Psi_t(\bw_t)\|^2 \le 2(\lambda^2 \|\bw_t\|^2 + \|\nabla \Psi_t(\bw_t)\|^2) \le4C^2.
\enn
According to the update rule, we have
\ben
\| \bw_{t+1}-\bw^{\star} \|^2 = \|\bw_t-\eta_t \nabla L_t(\bw_t) -\bw^{\star} \|^2 = \|\bw_t-\bw^{\star}\|^2 - 2 \left< \bw_t-\bw^{\star},\eta_t \nabla L_t(\bw_t) \right> + \eta_t^2 \|\nabla L_t(\bw_t)\|^2.
\enn
By the $\lambda$-strongly convexity of $L_t(\bw_t)$,
\ben
L_t(\bw^{\star}) - L_t(\bw_t) \ge \left<\nabla L_t(\bw_t),\bw^{\star}-\bw_t \right> + \frac{\lambda}{2} \|\bw_t -  \bw^{\star}\|^2= - \left<\nabla L_t(\bw_t),\bw_t-\bw^{\star}\right> + \frac{\lambda}{2} \|\bw_t -  \bw^{\star}\|^2.
\enn
That is
\ben
2\eta_t(L_t(\bw_t)-L_t(\bw^{\star})) &&\le  \left<2\eta_t \nabla L_t(\bw_t),\bw_t-\bw^{\star}\right> - \eta_t \lambda\|\bw_t -  \bw^{\star}\|^2 \\
&& = \|\bw_t-\bw^{\star}\|^2- \| \bw_{t+1}-\bw^{\star} \|^2 + \eta_t^2 \|\nabla L_t(\bw_t)\|^2 - \eta_t \lambda\|\bw_t -  \bw^{\star}\|^2 \\
&& \le \|\bw_t-\bw^{\star}\|^2- \| \bw_{t+1}-\bw^{\star} \|^2 + 4\eta_t^2 C^2 - \eta_t \lambda\|\bw_t -  \bw^{\star}\|^2.
\enn
Summing over $t=1,2,\cdots,T$ and rearranging, we obtain
\ben
\text{Regret}_{T}^{AUC} && \le \sum_{i=1}^\top \frac{1}{2\eta_t}(\|\bw_t-\bw^{\star}\|^2- \| \bw_{t+1}-\bw^{\star} \|^2) - \frac{\lambda}2\|\bw_t - \bw^{\star} \|^2 +2\eta_tC^2 \\
&& \le \sum_{i=1}^{T-1} \|\bw_{t+1} - \bw^{\star}\|^2(\frac{1}{2\eta_{t+1}} - \frac{1}{2\eta_{t}} - \frac{\lambda}2) + 2 C^2 \sum_{i=1}^\top \eta_t \\
&& \le \frac{2C^2 (1 + \ln T)}{\lambda}.
\enn
\end{proof}

\subsection*{Proof of Lemma \ref{lemma-bound-f}}

\begin{proof}
According to the update rule, we have
\begin{align}\label{lemma17}
\begin{split}
\|f_{t+1}\|_{\mathcal H} & = \|f_t - \eta_t \nabla L_t(f_t) \|_{\mathcal H} \\
&= \|(1-\lambda \eta_t) f_t + \frac{\eta_t}{\sqrt{A_t}} y_t\Psi_t k(.,\bx_t)- \frac{\eta_t}{2N_t\sqrt{A_t}} \sum_{i \in I_t}\left(2y_t\Psi_t + f_t(\bx_i)-\mu_{f_t}\right)k(.,\bx_i)\|_{\mathcal H} \\
& \le (1-\lambda\eta_t)\|f_t\|_{\mathcal H} +  \frac{\eta_t}{N_t\sqrt{A_t}} \Psi_t \| \sum_{i \in I_t} (k(.,\bx_t)-k(.,\bx_i))\|_{\mathcal H} \\ 
& +\frac{\eta_t}{2N_t\sqrt{A_t}} \|\sum_{i \in I_t}(f_t(\bx_i)-\mu_{t})k(.,\bx_i)\|_{\mathcal H}.
\end{split}
\end{align}
Because of the kernel is bounded, $k(\bx_t,\bx_i) \le k(\bx_t,\bx_t) \le 1$, we have
\begin{align}\label{lemma17-1}
\|(k(.,\bx_t)-k(.,\bx_i))\| = \sqrt{k(\bx_t,\bx_t)-2k(\bx_t,\bx_i)+k(\bx_i,\bx_i)} \le \sqrt 2.
\end{align}
According to \eqref{A} and \eqref{PsiMM}, we have 
\begin{align}\label{lemma17-2}
0 \le \frac{\Psi_t}{\sqrt{A_t}} = \frac{\ell_t+\sqrt{A_t}}{2\sqrt{A_t}} = \frac12 \left(1 + \frac{\ell_t}{\sqrt{\ell_t^2 + \sigma_{t}^2}}\right) \le 1.
\end{align}
Then we consider the third term of the inequality in \eqref{lemma17}, due to the triangle inequality and Cauchy-Schwarz inequality
\begin{align}\label{lemma17-3}
\begin{split}
\|\sum_{i \in I_t}\frac{(f_t(\bx_i)-\mu_{t})}{\sqrt{A_t}}k(.,\bx_i)\|_{\mathcal H}  &\le \sum_{i \in I_t} \| \frac{(f_t(\bx_i)-\mu_{t})}{\sqrt{A_t}}k(.,\bx_i)\| \le \sum_{i \in I_t} |\frac{(f_t(\bx_i)-\mu_{t})}{\sqrt{A_t}}| \\
&\le \sum_{i \in I_t} \frac 12 \left(1 + \left(\frac{f_t(\bx_i)-\mu_{t}}{\sqrt{A_t}} \right)^2 \right) \le N_t.
\end{split}
\end{align}
Pluging \eqref{lemma17-1},\eqref{lemma17-2},\eqref{lemma17-3} into \eqref{lemma17}, then we have
\ben
\|f_{t+1}\|_{\mathcal H}  &\le (1-\lambda \eta_t)\|f_t\|_{\mathcal H} + (\sqrt2+\frac12) \eta_t.
\enn
By expanding $\|f_t\|_{\mathcal H}$ iteratively and using $f_0=0$, we have
\ben
&&\|f_{t+1}\|_{\mathcal H} \le (\sqrt2+\frac12) (\eta_t + (1-\lambda \eta_t)\eta_{t-1}+\cdots +(1-\lambda\eta_t)(1-\lambda\eta_{t-1})\cdots(1-\lambda\eta_2)\eta_1 ) \\
&& = (\sqrt2+\frac12) \left(\frac{1}{\lambda t} + (1-\frac{1}{t})\frac{1}{\lambda(t-1)} + \cdots + (1-\frac{1}{t})(1-\frac{1}{t-1})\cdots(1-\frac{1}{2})\frac{1}{\lambda}   \right) \\
&& \le \frac{\sqrt2+\frac12}{\lambda}.
\enn
\end{proof}

\subsection*{Proof of Theorem \ref{The-regret-kernel-M}}
\begin{proof}
Let $f^{\star}$ be the best fixed classifier in hindsight. Since the maximum operation preserve convexity, then  $\psi_{M}$ which is derived by maximizing a sum of Hinge loss is also a convex function~\citep{Boyd2004}. By the $\lambda$-strong convexity of the loss function, $L_t(f)$, we have
\begin{align}\label{KRT}
\sum_{t=1}^T L_t(f_t) - \sum_{t=1}^T L_t(f^{\star}) \le \sum_{t=1}^T \left< \nabla L_t(f_t), f_t - f^{\star}\right>_{\mathcal H}-\frac \lambda 2 \| f_t - f^{\star}\|_{\mathcal H}^2,
\end{align}
Observing that
\ben
\|f_{t+1} - f^{\star}\|_{\mathcal H}^2 & = \|f_t -\eta_t \nabla L_t(f_t) - f^{\star}\|_{\mathcal H}^2= \|f_t - f^{\star}\|_{\mathcal H}^2 - 2\eta_t \left< \nabla L_t(f_t), f_t - f^{\star}\right>_{\mathcal H} + \eta_t^2 \|\nabla L_t(f_t)\|^2.
\enn
Then suming the above identity over $t \in [T]$ and rearranging, we have
\ben
\sum_{t=1}^T \left< \nabla L_t(f_t), f_t - f^{\star}\right>_{\mathcal H} &=&  \sum_{t=1}^T \frac{\eta_t}{2}\|\nabla L_t(f_t)\|_{\mathcal H}^2 +  \sum_{t=1}^T \frac{1}{2\eta_t}(\|f_t - f^{\star}\|_{\mathcal H}^2 - \|f_{t+1} - f^{\star}\|_{\mathcal H}^2).
\enn
Combine with \eqref{KRT}, we have
\ben
Regret_T^{AUC}  && \le \sum_{t=1}^T \frac{\eta_t}{2}\|\nabla L_t(f_t)\|_{\mathcal H}^2 +\sum_{t=1}^T \frac{1}{2\eta_t}(\|f_t - f^{\star}\|_{\mathcal H}^2 - \|f_{t+1} - f^{\star}\|_{\mathcal H}^2 )-\frac \lambda 2 \| f_t - f^{\star}\|_{\mathcal H}^2 \\
&& \le \sum_{t=1}^T \frac{\eta_t}{2}\|\nabla L_t(f_t)\|_{\mathcal H}^2 + \sum_{t=1}^{T-1} (\frac{1}{2\eta_{t+1}} - \frac{1}{2\eta_{t}} -\frac \lambda 2 )\|f_{t+1} - f^{\star}\|_{\mathcal H}^2  \\
&&= \sum_{t=1}^T \frac{\eta_t}{2}\|\nabla L_t(f_t)\|_{\mathcal H}^2.
\enn
According to \eqref{lemma17-1},\eqref{lemma17-2},\eqref{lemma17-3}, we can bound $\|\nabla L_t(f_t)\|_{\mathcal H}$  as
\ben
&&\|\nabla L_t(f_t)\|_{\mathcal H}  = \| \lambda f_{t} - \frac{1}{\sqrt{A_t}} y_t\Psi_t k(.,\bx_t) + \frac{1}{2N_t\sqrt{A_t}} \sum_{i \in I_t}(2y_t\Psi_t + f_t(\bx_i)-\mu_{t})k(.,\bx_i)  \|_{\mathcal H}  \\
&&\le \lambda \|f_t\|_{\mathcal H} +  \frac{1}{N_t\sqrt{A_t}} \Psi_t \| \sum_{i \in I_t} (k(.,\bx_t)-k(.,\bx_i))\|_{\mathcal H} +\frac{1}{2N_t\sqrt{A_t}} \|\sum_{i \in I_t}(f_t(\bx_i)-\mu_{t})k(.,\bx_i)\|_{\mathcal H} \\
&&= \lambda \|f_t\|_{\mathcal H} + (\sqrt2+\frac12) \\
&&\le (2\sqrt2+1).
\enn
Then we have
\ben
Regret_T^{AUC}  &\le \sum_{t=1}^T \frac{\eta_t}{2}\|\nabla L_t(f_t)\|_{\mathcal H}^2 \le \frac{(2\sqrt2+1)^2}{2\lambda} \sum_{t=1}^T \frac{1}{t} \le \frac{(2\sqrt2+1)^2}{2\lambda}(1+\ln T).
\enn
\end{proof}

\vskip 0.2in
\bibliography{luo-online-auc-v1}

\end{document}